\newcommand{\R}{\mathbb{R}}
\newcommand{\BigO}[1]{\ensuremath{\mathcal{O}\left(#1\right)}}                  %
\newcommand{\BigOm}[1]{\ensuremath{\Omega\left(#1\right)}}                      %
\newcommand{\vect}[1]{\ensuremath{\mathbf{#1}}}                                 %
\newcommand{\vectsym}[1]{\ensuremath{\boldsymbol{#1}}}                          %
\newcommand{\mat}[1]{\ensuremath{\mathbf{\MakeUppercase{#1}}}}                  %
\newcommand{\KL}[2]{\ensuremath{\mathbb{KL} \left(#1 \middle\Vert #2 \right)}}   %
\newcommand{\Exp}[2]{\ensuremath{\mathbb{E}_{#1}\left[#2\right]}}                %
\newcommand{\Var}[2]{\ensuremath{\mathrm{Var}_{#1}\left[#2\right]}}                %
\newcommand{\Cov}[2]{\ensuremath{\mathrm{Cov}_{#1}\left[#2\right]}}              %
\newcommand{\Ind}[1]{\ensuremath{\mathbf{1}\left[#1\right]}}                     %
\newcommand{\NormI}[1]{\ensuremath{\left\lVert #1 \right\rVert}_1}               %
\newcommand{\NormII}[1]{\ensuremath{\left\lVert #1 \right\rVert}_2}              %
\newcommand{\NormInfty}[1]{\ensuremath{\left\lVert #1 \right\rVert_{\infty}}}    %
\newcommand{\matrx}[1]{\begin{bmatrix}#1\end{bmatrix}}                           %
\newcommand{\InNorm}[1]{{\left\vert\kern-0.2ex\left\vert\kern-0.2ex\left\vert #1 
    \right\vert\kern-0.2ex\right\vert\kern-0.2ex\right\vert}}                    %
\newcommand{\InNormII}[1]{{\left\vert\kern-0.2ex\left\vert\kern-0.2ex\left\vert #1 
    \right\vert\kern-0.2ex\right\vert\kern-0.2ex\right\vert}_2}                    %
\newcommand{\InNormInfty}[1]{{\left\vert\kern-0.2ex\left\vert\kern-0.2ex\left\vert #1 
    \right\vert\kern-0.2ex\right\vert\kern-0.2ex\right\vert}_{\infty}}           %
\newcommand{\Abs}[1]{\ensuremath{\left \lvert #1 \right \rvert}}                 %
\newcommand{\Prob}[1]{\ensuremath{\mathrm{Pr}\left\{ #1 \right\}}}               %
\newcommand{\iid}{i.i.d~}                                                        %
\newcommand{\Grad}{\nabla}                                                       %
\DeclarePairedDelimiterX{\Inner}[2]{\langle}{\rangle}{#1, #2}                    %
\newcommand{\MI}{\mathnormal{I}}                                                     %
\newcommand{\Land}{\wedge}                                                       %
\newcommand{\Lor}{\vee}														     %
\newcommand{\what}[1]{\widehat{#1}}                                              %
\newcommand{\defeq}{\overset{\mathrm{def}}{=}}                                                      %
\DeclareMathOperator*{\union}{\cup}
\DeclareMathOperator*{\intersection}{\cap}
\DeclareMathOperator*{\argmax}{argmax}
\newtheorem{definition}{Definition}
\newtheorem{assumption}{Assumption}
\newtheorem{lemma}{Lemma}
\newtheorem{theorem}{Theorem}
\newtheorem{remark}{Remark}
\newtheorem{corollary}{Corollary}
\def\independenT#1#2{\mathrel{\rlap{$#1#2$}\mkern2mu{#1#2}}}
\DeclareMathOperator{\independent}{\protect\mathpalette{\protect\independenT}{\perp}}  %
\newcommand{\Set}[1]{\{#1\}}
\newcommand{\Xs}{\mathsf{X}}                   
\newcommand{\Vs}{\mathsf{V}}                   
\newcommand{\As}{\mathsf{A}}                   
\newcommand{\Bs}{\mathsf{B}}                   
\newcommand{\Cs}{\mathsf{C}}                   
\newcommand{\Es}{\mathsf{E}}                   
\newcommand{\Gs}{\mathcal{G}}                  
\newcommand{\Gsm}{\mathcal{G}_{m}}             
\newcommand{\Gsmk}{\mathcal{G}_{m,k}}          
\newcommand{\Gsrm}{\widetilde{\mathcal{G}}_{m}}             
\newcommand{\Gsrmk}{\widetilde{\mathcal{G}}_{m,k}}          
\newcommand{\Lsm}{\mathcal{G}_{m}^{l}}             
\newcommand{\Lsmk}{\mathcal{G}_{m,k}^{l}}          
\newcommand{\xv}{\vect{x}}                      
\newcommand{\Xc}{\mathcal{X}}                   
\newcommand{\Dc}{\mathcal{D}}                   
\newcommand{\Pf}{\mathcal{P}}                   
\newcommand{\Qf}{\mathcal{Q}}                   
\newcommand{\Qfz}{\mathcal{Q}_0}                   
\newcommand{\pii}{\mathsf{\pi}_i}               
\newcommand{\Pms}{\bm{\varphi}}                 
\newcommand{\Data}{\mathsf{S}}                  
\newcommand{\Dec}{\zeta}                       
\newcommand{\Est}[1]{\widehat{#1}}              
\newcommand{\err}{p_{\mathrm{err}}}             
\newcommand{\Par}[2]{\mathsf{\pi}_{#2}(#1)}  
\newcommand{\etai}{\vectsym{\eta}_{i}}       
\newcommand{\etaz}{\vectsym{\eta}_{0}}       
\newcommand{\Xpii}{\Xs_{\pii}}
\newcommand{\ST}{\vect{T}}                       
\newcommand{\EST}{\vect{\mathcal{T}}}            
\newcommand{\thetamin}{\theta_{\mathrm{min}}}
\newcommand{\eigmax}{\lambda_{\mathrm{max}}}
\newcommand{\wv}{\vect{w}}
\newcommand{\wbv}{\overline{\vect{w}}}
\newcommand{\wmax}{w_{\mathrm{max}}}
\newcommand{\Lf}{\mathcal{L}}                   
\newcommand{\Dl}{\Delta}
\newcommand{\etav}{\vectsym{\eta}}
\newcommand{\klmax}{\Delta_{\mathrm{max}}}
\newcommand{\sigmamin}{\sigma_{\mathrm{min}}}
\newcommand{\sigmamax}{\sigma_{\mathrm{max}}}
\newcommand{\mumax}{\mu_{\mathrm{max}}}
\newcommand{\Ball}{\mathbb{B}}
\begin{document}

\title{Information-theoretic limits of Bayesian network structure learning}

\author{Asish Ghoshal and Jean Honorio\\
Department of Computer Science\\
Purdue University\\
West Lafayette, IN - 47906\\
\{aghoshal, jhonorio\}@purdue.edu}

\date{}

\maketitle

\begin{abstract}
In this paper, we study the information-theoretic limits of learning the
structure of Bayesian networks (BNs), on discrete as well as
continuous random variables, from a finite number of samples. 
We show that the minimum number of samples required by \emph{any procedure} to recover
the correct structure grows as $\BigOm{m}$ and $\BigOm{k \log m + \nicefrac{k^2}{m}}$
for non-sparse and sparse BNs respectively, where $m$ is the number of variables and $k$
is the maximum number of parents per node.
We provide a simple recipe, based on an extension of the Fano's inequality,
to obtain information-theoretic limits of structure recovery
for any exponential family BN. We instantiate our result for
specific conditional distributions in the exponential family to characterize
the fundamental limits of learning various commonly used BNs,
such as conditional probability table based networks, Gaussian BNs,
noisy-OR networks, and logistic regression networks. 
En route to obtaining our main results, 
we obtain tight bounds on the number of sparse and non-sparse essential-DAGs. 
Finally, as a byproduct, we recover the information-theoretic limits
of sparse variable selection for logistic regression.
\end{abstract}

\section{Introduction}
\paragraph{Motivation.} Bayesian Networks (BNs) are a class of probabilistic graphical models that describe the conditional dependencies between a set of random variables as a directed acyclic graph (DAG).
However, in many problems of practical interest, the structure of the network is not known a priori and must be inferred from data. 

Although, many algorithms have been developed over the years for learning BNs (cf. \cite{koller2009probabilistic} and \cite{de2011efficient}
for a detailed survey of algorithms), an important question that has hitherto remained unanswered is the fundamental limits of
learning BNs, i.e., ``What is the minimum number of samples required 
by any procedure to recover the true DAG structure
of a BN?''. The answer to this question
would help shed light on the fundamental limits of learning the
DAG structure of BNs, and also help determine if existing algorithms
are optimal in terms of their sample complexity or if there exists a gap between
the state-of-the-art estimation procedures and the information-theoretic limits.
In this paper we obtain lower bounds on the minimum number of samples required to learn BNs
over $m$ variables, and sparse BNs over $m$ variables with maximum in-degree of $k$.
 
\paragraph{Contribution.} In this paper, we make the following contributions.
 We derive necessary conditions on the sample complexity of recovering the DAG structure of non-sparse
	and sparse BNs. We show that $\BigOm{m}$ samples are necessary
	for consistent recovery of the DAG structure of BNs, while for sparse networks
	$\BigOm{k \log m + \nicefrac{k^2}{m}}$ samples are necessary.
We provide a simple recipe for obtaining the information-theoretic limits of learning any exponential family
	BN, and we instantiate our result for specific conditional distributions to determine the
	fundamental limits of learning the structure of various widely used BNs, namely,
	conditional probability table (CPT) based networks, Gaussian networks, noisy-OR networks,
	and logistic regression networks. Our lower bound of  $\BigOm{k^2 \log m}$ 
	matches the upper bound on $\BigO{k^2 \log m}$, obtained by Ravikumar et al. \cite{Ravikumar2010}
	for $\ell_1$-regularized logistic regression. We also show that the 
	\emph{SparsityBoost} algorithm developed by Brenner and Sontag \cite{Brenner} for
    learning binary CPT BNs, which has a sample complexity of $\BigO{m^2(\nicefrac{1}{\thetamin})}$,
	is far from the information-theoretic limit of $\BigOm{\frac{k \log m + \nicefrac{k^2}{m}}{\log(\nicefrac{1}{\thetamin})}}$,
	where $\thetamin$, is the minimum probability value in the conditional probability tables.
	An interesting corollary of our main result is that learning \emph{layered BNs} ---
	where the ordering of the nodes is known (upto layers) and the parent set of each variable is constrained to be in the layer
	above it --- is as hard as learning general BNs in terms of their sample complexity.
Lastly, of independent interest, are our extension of the Fano's inequality to the case where there are latent variables,
and our upper-bound on the KL divergence between two exponential family distributions
as the inner product of the difference between the natural parameter and the expected sufficient statistics. 

\section{Related Work}
\label{sec:related_work}
H\"{o}ffgen \cite{hoffgen1993learning}, and Friedman and Yakhini \cite{friedman1996sample}
were among the first to derive sample complexity results for learning BNs.
In both \cite{hoffgen1993learning} and \cite{friedman1996sample}
the authors provide upper bounds on sample complexity of learning
a BN that is likelihood consistent, i.e., the likelihood of the learned network 
is $\epsilon$ away from the likelihood of the true network
in terms of the Kullback-Leibler (KL) divergence measure.
Abbeel  et al. \cite{Abbeel05} provide polynomial sample complexity results for learning likelihood
consistent factor graphs.

Among sample complexity results for learning structure consistent BNs,
where the structure of the learned network is close to the true network,
Spirtes et al. \cite{spirtes2000causation} and Cheng et al. \cite{Cheng02} provide such guarantees for
polynomial-time test-based methods, but the results hold only in the infinite-sample limit.
Chickering and Meek in \cite{Chickering02b} also provide a greedy hill-climbing
algorithm for structure learning that is structure consistent in the infinite sample limit.
Zuk et al. \cite{Zuk06} show structure consistency of a single network 
and do not provide uniform consistency for all candidate networks, i.e., 
the bounds relate to the error of learning a specific wrong network having a score
greater than the true network. 
Brenner and Sontag \cite{Brenner} provide upper bounds on the sample complexity of 
recovering the structure of sparse BNs. However, they consider
binary valued variables only and the sample complexity grows as $\BigO{m^2}$.

Fano's method has also been used to obtain lower bounds on the sample complexity
of undirected graphical model (Markov random fields or MRFs) selection. 
See Appendix \ref{app:comp_with_mrfs} for results for MRFs and technical
differences between learning BNs and MRFs.

\section{Preliminaries}
\label{sec:prelim}
Let $\Xs = \Set{X_1, \ldots, X_m}$ be a set of random variables,
where $X_i \in \Xc_i, \forall i \in [m]$. Let $\Dc \defeq \times_{i=1}^m \Xc_i$ be
the domain in which the variables in $\Xs$ jointly take their values.
A BN for $\Xs$ is a tuple $(G, \Pf(G, \Theta))$; where $G = (\Vs, \Es)$ is a directed acyclic graph (DAG)
with $\Vs = [m]$ being the vertex set and $\Es \subset [m] \times [m]$ being the set of directed edges,
and $\Pf(G, \Theta)$ is a probability distribution over $\Xs$ that is parameterized by $\Theta$ and
factorizes according to the DAG structure $G$. Particularly, $\forall \xv \in \Dc$,
$\Pf(\xv; G, \Theta) = \prod_{i = 1}^m \Pf_i(x_i; \pii(G), \Theta)$, 
where $\pii(G) \subseteq [m]\setminus\Set{i}$ is the parent set of the $i$-th node in $G$,
$\Xs_{\pii(G)} = \Set{X_j | j \in \pii(G)}$, $\Pf_i(.) = \Prob{x_i | \Xs_{\pii(G)}, \Theta_i}$ is the conditional distribution of $X_i$
given an assignment to its parent set, and $\Theta_i$ are the parameters for the $i$-th conditional distribution.

The DAG structure $G$ of a BN specifies the conditional independence relationships that exist between
different random variables in the set $\Xs$. Different graph structures which make the same conditional
independence assertions about a set of random variables are called Markov equivalent. 
\begin{definition}[Markov equivalence]
Two DAGs $G_1 = (\Vs, \Es_1)$ and $G_2 = (\Vs, \Es_2)$ are Markov equivalent
if for all disjoint subsets $\As, \Bs, \Cs \subset \Vs$, $\Xs_{\As} \independent \Xs_{\Bs} | \Xs_{\Cs}$ in $G_1$
$\iff$ $\Xs_{\As} \independent \Xs_{\Bs} | \Xs_{\Cs}$ in $G_2$.
\end{definition}
The set of DAGs that are Markov equivalent to the DAG $G$ is denoted by $[G]$. 
An essential graph\footnote{See Andersson et. al. \cite{andersson1997markov} for a formal definition.}, 
consisting of both directed edges, which are called protected edges, and undirected edges, is a canonical representation
of the (Markov) equivalence class of DAGs. The undirected edges can be oriented in either direction
without changing the conditional independence relationships encoded by the graphs. We denote by $G^*$
the essential graph for $[G]$.

\section{Problem Formulation}
\label{sec:problem}
Let $\Gs$ be an ensemble of DAGs. We denote by $\Gsm$ the ensemble of DAGs over $m$ nodes.
Also, let $\Pms(\Gs)$ be some set of ``parameter maps''.
A parameter map $\Theta \in \Pms(\Gs)$, maps a given DAG structure $G$ to a specific instance of the conditional
distribution parameters that are compatible with the DAG structure $G$, i.e., $\Theta(G)$. 
It is useful to think of $\Theta$ 
as a policy for setting the parameters of the conditional distributions, given a DAG $G$.
For instance, for binary CPT networks, a particular policy $\Theta$ would consist of
several candidate probability tables for each node, one for each possible number of parents the node can have 
(from $0$ to $m-1$), with entries set to some specific values. 
Then, given a DAG structure G, $\Theta(G)$ assigns a probability table
to each node (from the policy $\Theta$) according to the number of parents of the node in G.
This notion of parameter maps affords us the ability to generate a BN by sampling
the DAG structure and the parameters independently of each other, which, as would be evident later, is
a key technical simplification.
Let $\Pf_{\Gs}$ and $\Pf_{\Pms(\Gs)}$ be probability measures on the set $\Gs$
and $\Pms(\Gs)$ respectively. Nature picks a graph structure $G$, according to $\Pf_{\Gs}$,
and then samples a parameter map $\Theta$, independently, according to $\Pf_{\Pms(\Gs)}$.
Thereafter, nature generates a data set of $n$ \iid observations, $\Data = \Set{\xv^{(i)}}_{i = 1}^n$,
with $\xv^{(i)} \in \Dc$, from the BN $(G, \Pf(G, \Theta(G)))$. 
The problem of structure learning in BNs concerns with estimating
the graph structure $\what{G}$, up to Markov equivalence, from the data set $\Data$. In that context, we
define the notion of a decoder. A decoder is any function $\Dec: \Dc^n \rightarrow \Gs$
that maps a data set of $n$ observations to an estimated DAG $\Est{G} \in \Gs$. The estimation error
is defined as follows
\begin{align}
\err \defeq \inf_{\Dec} \sup_{\Pf_{\Pms(\Gs)}} \sum_{G} \int_{\Theta \in \Pms(\Gs)} \Big( \Pf_{\Pms(\Gs)}(\Theta) \Pf_{\Gs}(G) 
	 \Prob{\Dec(\Data) \notin [G] | G, \Theta} \Big), \label{eq:err}
\end{align}
where the probability $\Prob{.| G, \Theta}$ is computed over the data distribution $\Pf(G, \Pf(G, \Theta(G)))$
for a specific DAG structure $G$ and parameters $\Theta(G)$.

\emph{Note that our definition of estimation error is stronger than what is typically used in the literature
for structure recovery of MRFs (see e.g. \cite{santhanam2012information} and \cite{Wang2010}),
since we focus on the maximum error across all measures $\Pf_{\Pms(\Gs)}$ over the parameter maps $\Pms(\Gs)$
which itself can be uncountable.}
Here, we are interested in obtaining necessary conditions for consistent structure recovery of
BNs, i.e., we show that if the number of samples is less than a certain threshold, 
then any decoder $\Dec$ fails to recover the true graph structure with probability of error $\err > \nicefrac{1}{2}$.

\emph{We emphasize that while our sample complexity results invariably depend on the 
parameter space $\Pms(\Gs)$ under consideration, the decoder only has access to the data set $\Data$.}
Apart from $\Gsm$, we consider various other ensembles of DAGs in this paper, to fully characterize the
fundamental complexity of learning different classes of BNs. Among the ensembles we consider,
$\Gsmk$ denotes the family of DAGs, where each node is allowed to have
at most $k$ parents. We also consider generalizations of QMR-DT \cite{shwe1991probabilistic} 
type two-layered BNs, to multiple layers of nodes, with nodes in each layer
only allowed to have parents in the layer above it. 

Let $\mathcal{V} = \Set{\Vs_i}_{i=1}^l$,
define an ordering of $m$ nodes into $l$ layers where $\Vs_i$ is the set of nodes in the $i$-th layer.
We have that $\Abs{\Vs_i} = m_i$ and $\sum_{i=1}^l \Abs{\Vs_i} = m$.
$\Lsm(\mathcal{V})$ denotes an ensemble of DAG structures where $\forall G = (\Vs, \Es) \in \Lsm$, 
$\Vs = \bigcup_{\Vs_i \in \mathcal{V}} \Vs_i$ and $\Es = \Set{(u, v) | u \in \Vs_{i + 1} \Land v \in \Vs_{i},\; i \in  [l - 1]}$.
We write $\Lsm$ instead of $\Lsm(\mathcal{V})$ to indicate that the members of $\Lsm$ have some \emph{known}
layer-wise ordering of the $m$ nodes, without making the ordering explicit.
Finally, we consider another ensemble $\Lsmk \subset \Lsm$
where the nodes are allowed to have at most $k$ parents. Together, the ensembles
$\Gsm, \Gsmk, \Lsm$ and $\Lsmk$, span a wide range of the sample complexity landscape of
recovering the structure of BNs. 
In the following section we present our main result on the fundamental limits of
learning BNs. 

\section{Main Results}
\label{sec:main_results}
Fano's inequality is one of the primary tools used for deriving
necessary conditions on structure recovery of graphical models. The difficulty of recovering
the DAG structure of a BN, however, depends both on the structural properties of the ensemble
of DAG structures under consideration, as well as on the conditional distributions and their parameters.
In order to obtain guarantees about structure recovery, we treat the parameters of the conditional
distributions as latent variables --- variables that we do not observe and are not interested in estimating.
Given that the likelihood of the observed data depends, both on the structure and parameters of
the BN that generated the data, it behooves us to ask: ``\emph{If we are only
interested in recovering the structure of BNs, do the presence of
unobserved parameters make structure estimation easier or harder?}''
To rigorously answer this question, we extend the classic Fano's
inequality, which is defined for a Markov chain, to a slightly more general setting as given below.
\begin{figure}[h]
	\begin{minipage}{0.5\linewidth}
	\centering
	\includegraphics[width=0.4\linewidth]{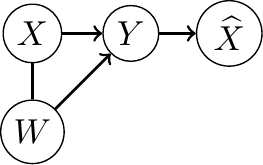}\\
	\centering
	(a)
	\end{minipage}%
	\begin{minipage}{0.5\linewidth}
	\centering
	\includegraphics[width=0.4\linewidth]{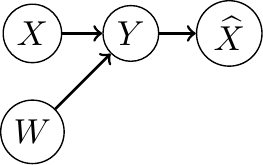}\\
	\centering
	(b)
	\end{minipage}
	\caption{\small Fano's inequality extension. In (a) the edge between $W$ and $X$
	is undirected to indicate that the edge can be oriented in either direction. \label{fig:fano}%
	}
\end{figure}
\begin{theorem}[Fano's inequality extension]
\label{thm:fano}
Let $W, X,$ and $Y$ be random variables and let $\what{X}$ be any estimator of $X$.
If the random variables are related according to the graphical model in Figure \ref{fig:fano} (a), then
\begin{align}
	\Prob{X \neq \what{X}} \geq 1 - \frac{\MI(Y ; X | W) + \log 2}{H(X|W)}. \label{eq:thm_fano1}
\end{align}
Moreover, if $W \in \mathcal{W}$, is independent of $X \in \mathcal{X}$ (Figure \ref{fig:fano} (b)),
and $\Pf_{\mathcal{X}}$ and $\Pf_{\mathcal{W}}$ be any probability measures over
 $\mathcal{X}$ and $\mathcal{W}$ respectively, then,
\begin{align}
&\sup_{\Pf_{\mathcal{W}}}\sum_{x \in \mathcal{X}} \int_{w \in \mathcal{W}}
	 \kern-1em \Prob{x \neq \what{X} |X=x , W = w} \Pf_{\mathcal{W}}(w) \Pf_{\mathcal{X}}(x)  \notag \\
	  & \qquad \geq  1 - \frac{ \sup_{w \in \mathcal{W}} \MI(Y ; X | W = w) + \log 2}{H(X)}. \label{eq:thm_fano2}
\end{align}
\end{theorem}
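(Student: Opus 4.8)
The plan is to establish \eqref{eq:thm_fano1} first, by a conditional version of the classical Fano argument in which $W$ plays the role of a side variable that we carry through every entropy and mutual-information term, and then to obtain \eqref{eq:thm_fano2} as a corollary by exploiting the independence $W \independent X$ in \fref{fig:fano}(b).

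For \eqref{eq:thm_fano1}, let $E \defeq \Ind{X \neq \what X}$ be the binary error indicator. The single structural fact I would read off from \fref{fig:fano}(a) is that the estimate $\what X$ is computed from $Y$ alone, so that conditioned on $W$ the triple forms a Markov chain $X \to Y \to \what X$; equivalently $\what X \independent X \mid (Y, W)$. This licenses the \emph{conditional} data-processing inequality $\MI(X ; \what X \mid W) \le \MI(X ; Y \mid W)$. Writing $\MI(X;\what X\mid W) = H(X\mid W) - H(X\mid \what X, W)$ then yields the lower bound $H(X \mid \what X, W) \ge H(X \mid W) - \MI(Y ; X \mid W)$. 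For the matching upper bound I would run Fano conditionally: since $E$ is a deterministic function of $(X,\what X)$,
\begin{align*}
H(X \mid \what X, W) &= H(X, E \mid \what X, W) \le H(E) + \Prob{E=1}\, H(X \mid \what X, W, E = 1) \\
&\le \log 2 + \Prob{X \neq \what X}\, H(X\mid W),
\end{align*}
where the last step uses $H(E)\le\log 2$ together with $H(X \mid \what X, W, E=1) \le \log(\Abs{\mathcal X}-1) \le H(X \mid W)$, the final inequality holding because the prior on $X$ is (conditionally) uniform on its support in every ensemble we use. Combining the two displays and rearranging gives \eqref{eq:thm_fano1}.

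For \eqref{eq:thm_fano2}, independence $W \independent X$ first gives $H(X \mid W) = H(X)$, so \eqref{eq:thm_fano1} becomes $\Prob{X \neq \what X} \ge 1 - (\MI(Y;X\mid W)+\log 2)/H(X)$ for any fixed $\Pf_{\mathcal W}$. Because the joint law is the product $\Pf_{\mathcal X}\Pf_{\mathcal W}$, the sum over $x$ and integral over $w$ on the left of \eqref{eq:thm_fano2} equals exactly this $\Prob{X \neq \what X}$. I would then bound the averaged conditional information, $\MI(Y;X\mid W) = \int_{w} \MI(Y;X\mid W=w)\,\Pf_{\mathcal W}(w) \le \sup_{w}\MI(Y;X\mid W=w)$, so that the numerator is replaced by a quantity independent of $\Pf_{\mathcal W}$. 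Since the resulting lower bound no longer depends on $\Pf_{\mathcal W}$, it survives taking the supremum over $\Pf_{\mathcal W}$ on the left, which proves \eqref{eq:thm_fano2}.

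The conceptually delicate step is the conditional data-processing inequality: one must verify that introducing the latent $W$ creates no information path from $X$ to $\what X$ that bypasses $Y$, i.e.\ that $\what X$ depends on $(X,W)$ only through $Y$. This is precisely where the modelling assumption that the decoder sees only the data $Y$ is used, and it is what makes the bound hold whether $W$ and $X$ are dependent (\fref{fig:fano}(a)) or independent (\fref{fig:fano}(b)). The remaining work is bookkeeping: keeping $\sup_{w}$ outside the information term while the expectation over $\Pf_{\mathcal W}$ stays inside, and justifying $\log(\Abs{\mathcal X}-1)\le H(X\mid W)$ through uniformity of the prior on $X$.
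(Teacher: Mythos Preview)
Your argument follows the paper's proof essentially step for step: the same error indicator $E$, the same upper/lower bounds on $H(X\mid \what X, W)$ via the chain rule and the conditional data-processing inequality, and the same passage from $\MI(Y;X\mid W)$ to $\sup_{w}\MI(Y;X\mid W=w)$ for \eqref{eq:thm_fano2}. You are in fact more careful than the paper at one point: the bound $H(X\mid \what X, W, E{=}1)\le H(X\mid W)$ does \emph{not} follow from ``conditioning reduces entropy'' (that principle controls the average over a conditioning variable, not the entropy on a fixed event slice), and indeed needs the uniform prior on $X$ that you explicitly flag; the paper's proof invokes conditioning-reduces-entropy at this step without qualification.
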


Proofs of main results can be found in Appendix \ref{app:proofs_main_results}.
\begin{remark}
Theorem \ref{thm:fano} can be seen an extension of Fano's inequality to the case
where there are latent variables $W$ that influence $Y$, while we are interested in
only estimating $X$. If in Figure \ref{fig:fano} we have that $W \rightarrow X$,
then $\MI(X; Y | W) \geq \MI(X; Y)$. Further, since $H(X | W) \leq H(X)$, we conclude
that presence of the latent variable $W$, reduces the estimation error $\err$.
\end{remark}
\begin{remark}
When $X$ and $W$ are independent, we get $\MI(X; Y | W) = \MI(X; Y)$,
and \eqref{eq:thm_fano1} reduces to the well known Fano's inequality. However, the conditional
mutual information $\MI(X; Y | W = w)$ in \eqref{eq:thm_fano2} can be computed easily as compared to $\MI(X; Y)$ 
when we have access to the conditional distribution of $Y$ given $X$ and $W$.
Also note that we do not need $\mathcal{W}$ to be countable.
\end{remark}
\begin{figure}
	\centering
	\includegraphics[width=0.3\linewidth]{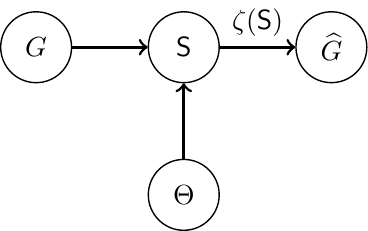}
	\caption{\small The DAG structure $G$ and a parameter map $\Theta$ are sampled independently. 
	Data set $\Data$ of $m$ samples is generated from the BN $(G, \Pf(G, \Theta(G))$.
	A decoder $\Dec$ then estimates the DAG structure $\what{G} = \Dec(\Data)$.\label{fig:schematic}}
\end{figure}
Theorem \ref{thm:fano} serves as our main tool for lower bounding the estimation error,
and subsequently obtaining necessary conditions on the number of samples. In order to obtain
sharp lower bounds on $\err$, we assume that
DAG structures and parameters maps (and by extension parameters) are sampled independently.
Figure \ref{fig:schematic} shows the schematics of the inference procedure.

\emph{Henceforth, we will use the terms ``parameter maps'' and ''parameters'' interchangeably, since
given a DAG structure $G$, the parameter map $\Theta$, maps $G$ to a specific parameterization
of the BN.}
Then, given any ensemble of DAG structures $\mathcal{G}$, the main steps involved in using Theorem \ref{thm:fano}
to lower bound $\err$ are: (a) obtaining lower bounds on $H(G) = \log \Abs{\mathcal{G}}$, 
which follows from our assumption that $G$ is sampled uniformly from $\mathcal{G}$, and
(b) computing the mutual information between the data set and DAG structures over
all possible parameter choices, i.e., $\sup_{\Theta \in \Pms(\mathcal{G})} \MI(\Data; G | \Theta)$.
To accomplish each of the above objectives, we consider restricted
ensembles $\Gsrm \subset \Gsm$ and $\Gsrmk \subset \Gsmk$, of size-one equivalence classes,
i.e., $\forall G \in \Gsrm \Lor G \in \Gsrmk,\, \Abs{[G]} = 1$. Note that for
any graph $G$ in $\Lsm$ or $\Lsmk$, we have that $\Abs{[G]} = 1$, since edges are constrained
to go from layer $(i+1)$ to $i$.  Thus, the ensembles $\Gsrm, \Gsrmk, \Lsm$ and $\Lsmk$
can be thought of as consisting of essential DAGs, where all edges are protected. 
In the following section, we bound the number of essential DAGs in each of the restricted ensembles.
\paragraph{Enumerating DAGs.}
Essential DAGs, i.e., Markov equivalent classes of DAGs of size 1,
was first enumerated by Steinsky \cite{Steinsky2003}. However, the number of DAGs is given as a
recurrence relation, for which a closed form solution is difficult to compute. Therefore, we
compute tight bounds on the number of essential DAGs in the following paragraphs. 
In the following lemmas we bound the number of DAGs in each of the
restricted ensembles introduced previously. 
\begin{lemma}
\label{lemma:gsrm_size}
The size of the restricted ensemble $\Gsrm$ is bounded as follows:
\begin{align}
2^{(\nicefrac{m(m - 3)}{2}) + 1} \leq \Abs{\Gsrm} \leq m!\, 2^{\nicefrac{m(m - 1)}{2}},
\end{align}
and $\log \Abs{\Gsrm} \geq ((\nicefrac{m(m - 3)}{2}) + 1) \log 2$.
\end{lemma}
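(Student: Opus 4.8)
The plan is to read $\Gsrm$ as the set of DAGs on $m$ nodes that coincide with their own essential graph, i.e.\ DAGs in which \emph{every} edge is compelled (protected), and then to bound its cardinality from above and below by very different arguments. For the upper bound I would simply embed $\Gsrm \subset \Gsm$ and count all DAGs: every DAG has at least one topological ordering of its vertices, so fixing one of the $m!$ orderings, the edge set is a subset of the $\binom{m}{2} = m(m-1)/2$ vertex pairs oriented consistently with that order, giving at most $2^{m(m-1)/2}$ DAGs per ordering. Since every DAG is counted at least once, $\Abs{\Gsrm} \le \Abs{\Gsm} \le m!\,2^{m(m-1)/2}$, which is the stated upper bound; over-counting DAGs with several topological orderings only helps.

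The lower bound is where the real work lies, and the plan is to exhibit an explicit, injectively parameterized family $\mathcal{F} \subseteq \Gsrm$ with $\Abs{\mathcal{F}} \ge 2^{m(m-3)/2 + 1}$ and to certify membership through the characterization of essential graphs (Andersson et al.\ \cite{andersson1997markov}): $\Abs{[G]} = 1$ exactly when every edge of $G$ is strongly protected, i.e.\ forced by the immoralities together with Meek's orientation rules. A clean seed is that any edge lying in an immorality (v-structure) is oriented already in the first step of essential-graph recovery, leaving no reversible edge. I would therefore anchor $\mathcal{F}$ with a small rigid backbone of permanently non-adjacent vertices that seeds at least one immorality, and then attach a large collection of "forward" edges along the vertex order whose orientations are forced by transitivity once the backbone is compelled. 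Injectivity is immediate, since the free edges can be read directly off each member, so the count reduces to the number of independent binary choices.

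The main obstacle is that this construction must operate on an extremely tight budget. Immoralities require \emph{non-adjacent} co-parents, but adding edges makes vertices adjacent and destroys the very v-structures that protect earlier edges -- the complete DAG, where no pair is non-adjacent, is the extreme non-essential case, and protection by immoralities alone caps the family at roughly $2^{m^2/4}$, short of the required $\approx 2^{m^2/2}$. The target exponent $m(m-3)/2 + 1$ equals $\binom{m-1}{2} = \binom{m}{2} - (m-1)$, the number of non-consecutive forward pairs in a linear order, so I am allowed to sacrifice only $m-1$ of the $\binom{m}{2}$ pairs to non-freedom (fixed backbone plus anchored non-adjacencies) while every one of the remaining $\binom{m-1}{2}$ edges must stay compelled for \emph{every} choice of the others. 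The hard part will be arranging these $O(m)$ non-free pairs so that compelledness propagates (via Meek's rules) to all present forward edges uniformly across the exponentially many configurations, and verifying that no member ever admits a reversible edge; pinning down such a construction and carrying out the matching count is the crux. Taking logarithms of $\Abs{\mathcal{F}}$ then gives $\log \Abs{\Gsrm} \ge (m(m-3)/2 + 1)\log 2$, while the upper bound follows routinely from the counting above.
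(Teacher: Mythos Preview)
Your upper bound is correct and essentially the paper's: both dominate $\Abs{\Gsrm}$ by the total number of labeled DAGs, which is at most $m!\,2^{m(m-1)/2}$ via the topological-ordering count.

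For the lower bound your plan has a real gap: you identify the right target ($\binom{m-1}{2}$ free bits) and the right obstacle (compelledness must survive every configuration of the free edges), but you never produce the family, and you yourself flag the construction as ``the crux''. The paper sidesteps this entirely by arguing \emph{recursively} rather than building a global family. Following Steinsky~\cite{Steinsky2003}, one notes that a DAG has $\Abs{[G]}=1$ iff it contains no covered edge (an edge $u\to v$ with $\pi_v\setminus\{u\}=\pi_u$), and that adjoining a fresh terminal vertex $v$ to an essential DAG $G_0$ on $m-1$ nodes leaves all old edges uncovered (their parent sets are untouched) while a new edge $u\to v$ is covered precisely when $\pi_v=\pi_u^{G_0}\cup\{u\}$. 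These $m-1$ forbidden parent sets are pairwise distinct (else $G_0$ would contain a $2$-cycle), so there are $2^{m-1}-(m-1)\ge 2^{m-2}$ admissible choices. This yields $c_m\ge 2^{m-2}c_{m-1}$, and iterating gives
\[
c_m \;\ge\; 2^{(m-2)+(m-3)+\cdots+0}\;=\;2^{\binom{m-1}{2}}\;=\;2^{m(m-3)/2+1}.
\]
The crucial simplification is that protectedness is checked \emph{locally at the moment each vertex is inserted}; there is no global Meek-rule propagation to verify across $2^{\Theta(m^2)}$ configurations. Your one-shot construction might be completable, but the inductive route is both shorter and avoids exactly the headache you anticipated.
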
%
Note that the lower bound in Lemma \ref{lemma:gsrm_size} is asymptotically tight.
Now we bound the number of essential DAGs where each node is allowed to have at most $k$ parents. 

\begin{lemma}
\label{lemma:gsrmk_size}
Assuming $k > 1$ and $m > 2$, the size of the restricted ensemble $\Gsrmk$ is bounded as follows:
\begin{gather}
2^{(\nicefrac{k(k - 3)}{2}) + 1} \prod_{j=k+1}^{m-1} \left(\sum_{i=0}^k {j - 1 \choose i} \right) \leq \Abs{\Gsrmk} 
 \leq m!\, 2^{\nicefrac{k(k - 1)}{2}} \prod_{j=k+1}^{m-1} \left(\sum_{i=0}^k {j \choose i} \right),
\end{gather}
and $\log \Abs{\Gsrmk} \geq k\big\{\log (m-2)! - (m-k-2) \log k -\log k! \big\} 
	  + \{(\nicefrac{k(k - 3)}{2}) + 1\} \log 2$.
\end{lemma}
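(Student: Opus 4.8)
Since $\Gsrmk\subseteq\Gsmk$, it suffices to count \emph{all} DAGs with maximum in-degree $k$. Every DAG admits at least one topological ordering, so I would overcount by ranging over all $m!$ vertex orderings and, for a fixed ordering $v_1,\dots,v_m$, choosing the parent set of each $v_j$ independently among its predecessors subject to the cap. A node with $j$ predecessors contributes at most $\sum_{i=0}^{\min(j,k)}\binom{j}{i}$ choices: this is $2^{j}$ when $j\le k$ (every subset is admissible) and $\sum_{i=0}^{k}\binom{j}{i}$ when $j>k$. The low-degree nodes therefore yield a $2^{O(k^2)}$ prefactor (the $2^{k(k-1)/2}$ in the statement), while the remaining nodes yield $\prod_{j=k+1}^{m-1}\sum_{i=0}^{k}\binom{j}{i}$; multiplying by $m!$ gives the stated upper bound, up to the exact bookkeeping of the $O(k)$ boundary nodes.

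\textbf{Lower bound — the construction.} Note first that the target prefactor is exactly $2^{\binom{k-1}{2}}=2^{k(k-3)/2+1}$, i.e. the bound of Lemma~\ref{lemma:gsrm_size} with $m$ replaced by $k$. This dictates a two-piece construction: (i) a \emph{core} essential DAG on the first $k$ vertices, drawn from the family underlying Lemma~\ref{lemma:gsrm_size}, contributing at least $2^{k(k-3)/2+1}$ graphs; and (ii) an \emph{extension} in which the vertices $k+1,\dots,m$ are appended one at a time as sinks (latest in a common topological order), each new vertex $j$ choosing an arbitrary parent set $S_j$ of size at most $k$ among the $j-1$ already-placed vertices. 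Appending a sink leaves the parent sets of all earlier vertices untouched and gives them no new child, so every previously placed edge retains its compelled/covered status and the in-degree cap is respected; essentialness is thus preserved edge-by-edge, \emph{provided} every new in-edge is compelled. The independent per-vertex choices then multiply to $\prod_{j=k+1}^{m-1}\sum_{i=0}^{k}\binom{j-1}{i}$ (vertex $j=k+1$ selecting any of the $2^{k}$ subsets of the core, and so on), matching the claimed product.

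\textbf{The main obstacle.} The crux is showing that \emph{every} admissible $S_j$ produces only compelled new edges, so that each counted configuration is genuinely essential. Using that the reversible edges are precisely the covered edges (equivalently, Meek's orientation rules), the edge $u\to j$ into the fresh sink $j$ is compelled as soon as $u$ has a parent outside $S_j$ — the configuration $c\to u\to j$ with $c\not\sim j$ — i.e. whenever $\mathrm{Pa}(u)\not\subseteq S_j$. Since $|S_j|\le k$ and $u\notin\mathrm{Pa}(u)$, this holds automatically when $u$ is in-degree saturated ($|\mathrm{Pa}(u)|=k$), because $\mathrm{Pa}(u)\cup\{u\}$ cannot fit in a set of size $\le k$. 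Hence the core and the addition order must be arranged so that the vertices eligible as parents carry enough incoming structure to force $\mathrm{Pa}(u)\not\subseteq S_j$ for every admissible $S_j$, with the low-degree boundary vertices (sources and near-sources) handled separately — either excluded from parent sets or protected directly via the immorality and chain rules. Making this case analysis airtight, so that the entire product of subset-choices survives as essential DAGs, is where the real work lies.

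\textbf{From the product to the logarithmic bound.} Taking logarithms gives $\log\Abs{\Gsrmk}\ge\{k(k-3)/2+1\}\log 2+\sum_{j=k+1}^{m-1}\log\sum_{i=0}^{k}\binom{j-1}{i}$, in which the first term already matches. I would then discard all but the top binomial, $\sum_{i=0}^{k}\binom{j-1}{i}\ge\binom{j-1}{k}$, and apply $\binom{j-1}{k}\ge\bigl((j-1)/k\bigr)^{k}$ (valid since $j-1\ge k$), obtaining $\sum_{j=k+1}^{m-1}\log\binom{j-1}{k}\ge k\sum_{l=k}^{m-2}\log l-k(m-1-k)\log k=k\bigl(\log(m-2)!-\log(k-1)!\bigr)-k(m-1-k)\log k$. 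A one-line rearrangement using $\log k!=\log(k-1)!+\log k$ turns this into $k\{\log(m-2)!-(m-k-2)\log k-\log k!\}$, which is exactly the claimed inequality.
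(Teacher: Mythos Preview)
Your upper bound and your logarithmic simplification are both fine and match the paper. The gap is in the lower-bound construction, specifically in what you call the ``main obstacle.''

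You aim to arrange the core and the addition order so that \emph{every} admissible parent set $S_j$ yields only compelled in-edges at the new sink $j$. This cannot be made to work: take $S_j=\{u\}$ for any source $u$ in the current graph --- then $\mathrm{Pa}(j)\setminus\{u\}=\varnothing=\mathrm{Pa}(u)$, so $u\to j$ is covered no matter how you have built the rest of the graph. More generally, your sufficient condition ``$\mathrm{Pa}(u)\not\subseteq S_j$'' is only one way an edge can be compelled; the clean necessary-and-sufficient characterization is that the in-edge $u\to j$ into a fresh sink is covered \emph{iff} $S_j=\mathrm{Pa}(u)\cup\{u\}$. Hence there will always be bad choices of $S_j$, and no amount of ``arranging the core'' or invoking Meek's rules case-by-case will salvage the full count $\sum_{i=0}^{k}\binom{j-1}{i}$.

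The paper's resolution is simple once you see it: do not try to make every $S_j$ work --- count the bad ones and subtract. From the characterization above, each vertex $u$ already present with $|\mathrm{Pa}(u)|<k$ contributes \emph{exactly one} bad set $S_j=\mathrm{Pa}(u)\cup\{u\}$ (and if $|\mathrm{Pa}(u)|=k$ the corresponding set has size $k+1$ and is disallowed anyway). So when adding a sink to a graph on $j-1$ vertices there are at most $j-1$ bad parent sets, and the number of good ones is at least
\[
\sum_{i=0}^{k}\binom{j-1}{i}-(j-1)\;\geq\;\sum_{i=0}^{k}\binom{j-2}{i},
\]
the last step being Pascal's identity (valid for $k\ge 2$). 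Unrolling this recurrence from the base case $c_{k,k}\ge 2^{k(k-3)/2+1}$ of Lemma~\ref{lemma:gsrm_size} and re-indexing gives exactly the product $\prod_{j=k+1}^{m-1}\sum_{i=0}^{k}\binom{j-1}{i}$ in the statement. In short, the $\binom{j-1}{i}$ in the stated product is \emph{already the post-subtraction count} (shifted by one index), not the naive full count you were trying to justify.
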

Note that using Stirling's factorial formula, the above lemma gives the following lower bound
on the number of sparse essential DAGs: $\log \Abs{\Gsrmk} = \BigOm{k m \log m}$. 
Further, a little calculation shows that $\log \Abs{\Gsrmk} = \BigO{k m \log m}$ for large enough $m$.
Thus our bounds for the number of sparse essential DAGs is tight.
The following lemma bounds the number of ``layered'' essential DAGs.
\begin{lemma}
\label{lemma:ls_size}
The number of BNs in the family $\Lsm$ and $\Lsmk$ is as follows:
\begin{align}
\Abs{\Lsm} = \prod_{i=1}^{l - 1} (2^{m_{i + 1}})^{m_i},\;
\Abs{\Lsmk} = \prod_{i=1}^{l - 1} \left[\sum_{j=0}^k {m_{i+1}\choose j} \right]^{m_i}.
\end{align}
Further, $\log \Abs{\Lsm}$ and $\log \Abs{\Lsmk}$ are given as follows:
\begin{gather*}
\log \Abs{\Lsm} = (\log 2) \sum_{i=1}^{l - 1} (m_{i + 1})(m_i),\;
\log \Abs{\Lsmk} \geq k \sum_{l=1}^{l - 1} m_i \log \left(\frac{m_{i+1}}{k}\right).
\end{gather*}
\end{lemma}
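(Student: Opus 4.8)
The plan is to observe that the layered constraint completely decouples the choice of parent sets across nodes, so that both $\Abs{\Lsm}$ and $\Abs{\Lsmk}$ factorize as a product of per-node counts. First I would note that since every edge in a member of $\Lsm$ (or $\Lsmk$) is forced to go from layer $\Vs_{i+1}$ to layer $\Vs_i$, a node $v \in \Vs_i$ for $i \in [l-1]$ may take as its parent set \emph{any} subset of $\Vs_{i+1}$, and these choices are entirely unconstrained by one another. In particular, no combination of choices can ever create a directed cycle, because edges always point from a higher-indexed layer to a lower-indexed one; and nodes in the top layer $\Vs_l$ have no admissible parents at all. This independence-plus-acyclicity observation is the one conceptual point of the argument, and it is exactly what licenses writing the count as a product, over the $m_i$ nodes of each layer $i \in [l-1]$, of the number of admissible parent sets for a single node in that layer.

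With the product form established, the two exact counts follow by elementary enumeration. For $\Lsm$, a single node in layer $i$ admits any of the $2^{m_{i+1}}$ subsets of $\Vs_{i+1}$, giving $(2^{m_{i+1}})^{m_i}$ configurations for layer $i$ and hence $\prod_{i=1}^{l-1}(2^{m_{i+1}})^{m_i}$ in total. For $\Lsmk$, each node may use only subsets of size at most $k$, of which there are $\sum_{j=0}^k {m_{i+1} \choose j}$, yielding the stated product $\prod_{i=1}^{l-1}\bigl[\sum_{j=0}^k {m_{i+1} \choose j}\bigr]^{m_i}$. Taking logarithms then gives the exact expression $\log\Abs{\Lsm} = (\log 2)\sum_{i=1}^{l-1} m_i\, m_{i+1}$ for the dense family, and $\log\Abs{\Lsmk} = \sum_{i=1}^{l-1} m_i \log\bigl(\sum_{j=0}^k {m_{i+1} \choose j}\bigr)$ for the sparse one.

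To reach the claimed lower bound for the sparse family, I would discard all but the largest binomial term, using $\sum_{j=0}^k {m_{i+1} \choose j} \geq {m_{i+1} \choose k}$, and then apply the standard inequality ${m_{i+1} \choose k} \geq (m_{i+1}/k)^k$. Substituting into the per-layer logarithm gives $\log\bigl(\sum_{j=0}^k {m_{i+1} \choose j}\bigr) \geq k \log(m_{i+1}/k)$, and summing the resulting bound weighted by $m_i$ over $i \in [l-1]$ produces $\log\Abs{\Lsmk} \geq k \sum_{i=1}^{l-1} m_i \log(m_{i+1}/k)$, as required.

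There is no substantive obstacle in this lemma: essentially everything reduces to the initial structural observation that the layered ordering makes the parent-set choices independent and automatically acyclic. Once that is in place, the two exact counts are immediate, and the sparse lower bound is a routine application of two standard binomial inequalities.
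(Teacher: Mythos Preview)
Your proposal is correct and follows essentially the same approach as the paper: both arguments rest on the observation that a node in layer $i$ may independently choose any (respectively, any size-$\leq k$) subset of $\Vs_{i+1}$ as its parent set, which immediately yields the stated product formulas. In fact your write-up is slightly more complete than the paper's own proof, since you explicitly derive the lower bound on $\log\Abs{\Lsmk}$ via ${m_{i+1}\choose k} \geq (m_{i+1}/k)^k$, whereas the paper's proof only states the exact counts and leaves that final inequality implicit.
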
%

Next, we compute bounds on the mutual information between the data set and DAG structures.
\paragraph{Mutual Information Bounds.}
The mutual information $\MI(\Data; G | \Theta)$
cannot be computed exactly, in general. Therefore, we use the following lemma to bound the mutual
information from above.
\begin{lemma}
\label{lemma:mi_bound_general}
Let $\Pf_{\Data | G, \Theta}$ be the distribution
of $\Data$ conditioned on a specific DAG $G$ and specific parameters $\Theta$,
and let $\Qf$ be any distribution over $\Data$. Then we have
\begin{align}
\sup_{\Theta \in \Pms(\mathcal{G})} \MI(\Data; G | \Theta) 
	&\leq \sup_{\Theta \in \Pms(\mathcal{G})} \frac{1}{\Abs{\mathcal{G}}} \sum_{G \in \mathcal{G}} 
		\KL{\Pf_{\Data | G, \Theta}}{\Qf}.	\label{eq:mi_kl}
\end{align}
\end{lemma}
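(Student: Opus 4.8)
The plan is to use the standard variational (``golden-formula'') upper bound on mutual information, in which the true output marginal is replaced by an arbitrary reference distribution $\Qf$ at the cost of a nonnegative slack term. First I would fix a single parameter map $\Theta \in \Pms(\mathcal{G})$ and work with the mutual information $\MI(\Data; G \mid \Theta)$ computed under that fixed $\Theta$. Because, by assumption, $G$ is drawn uniformly from $\mathcal{G}$ and independently of $\Theta$, the induced marginal of the data is the uniform mixture $\Pf_{\Data \mid \Theta} = \frac{1}{\Abs{\mathcal{G}}} \sum_{G \in \mathcal{G}} \Pf_{\Data \mid G, \Theta}$, and the mutual information admits the exact representation $\MI(\Data; G \mid \Theta) = \frac{1}{\Abs{\mathcal{G}}} \sum_{G \in \mathcal{G}} \KL{\Pf_{\Data \mid G, \Theta}}{\Pf_{\Data \mid \Theta}}$.

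Next, for any fixed distribution $\Qf$ over $\Data$, I would expand the averaged divergence $\frac{1}{\Abs{\mathcal{G}}} \sum_{G \in \mathcal{G}} \KL{\Pf_{\Data \mid G, \Theta}}{\Qf}$ by inserting the marginal $\Pf_{\Data \mid \Theta}$ inside the logarithm and splitting the resulting sum into two parts. The first part is precisely $\MI(\Data; G \mid \Theta)$ by the representation above, and the second part, after averaging the conditionals $\Pf_{\Data \mid G, \Theta}$ over the uniform prior on $G$ (which reproduces $\Pf_{\Data \mid \Theta}$), collapses to the single term $\KL{\Pf_{\Data \mid \Theta}}{\Qf}$. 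This yields the decomposition $\frac{1}{\Abs{\mathcal{G}}} \sum_{G \in \mathcal{G}} \KL{\Pf_{\Data \mid G, \Theta}}{\Qf} = \MI(\Data; G \mid \Theta) + \KL{\Pf_{\Data \mid \Theta}}{\Qf}$.

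Finally, since KL divergence is nonnegative, dropping the term $\KL{\Pf_{\Data \mid \Theta}}{\Qf} \geq 0$ gives $\MI(\Data; G \mid \Theta) \leq \frac{1}{\Abs{\mathcal{G}}} \sum_{G \in \mathcal{G}} \KL{\Pf_{\Data \mid G, \Theta}}{\Qf}$, valid for every $\Theta \in \Pms(\mathcal{G})$ and every $\Qf$. Because this inequality holds pointwise in $\Theta$, taking the supremum over $\Theta$ on both sides preserves it and delivers exactly \eqref{eq:mi_kl}. I do not expect a genuine obstacle here, as this is a classical argument; the only point worth verifying carefully is that the cross term telescopes to a single KL divergence, which hinges on the uniform prior on $G$ making the average of the conditionals equal the marginal $\Pf_{\Data \mid \Theta}$. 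The nonnegativity of KL then does the remaining work, and the bound is tight precisely when $\Qf$ coincides with the data marginal.
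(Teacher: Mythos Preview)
Your proposal is correct and follows essentially the same argument as the paper's proof: both write the mutual information as the uniform average of $\KL{\Pf_{\Data\mid G,\Theta}}{\Pf_{\Data\mid\Theta}}$, insert the reference distribution $\Qf$ to obtain the decomposition $\frac{1}{\Abs{\mathcal{G}}}\sum_G \KL{\Pf_{\Data\mid G,\Theta}}{\Qf} = \MI(\Data;G\mid\Theta) + \KL{\Pf_{\Data\mid\Theta}}{\Qf}$, and then drop the nonnegative slack term before taking the supremum over $\Theta$. The only cosmetic difference is that the paper inserts $\Qf$ into the KL toward $\Pf_{\Data\mid\Theta}$, while you insert $\Pf_{\Data\mid\Theta}$ into the KL toward $\Qf$; the resulting identity and the use of KL nonnegativity are identical.
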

Assuming that $\Xc_i = \Xc,\, \forall i \in [m]$, we chose
the distribution $\Qf$ to be the product distribution $\Qf = \Qfz^{m n}$,
where $\Qfz = \Pf_i(\varnothing, \Theta_i)$. In other words, the distribution $\Qf$ is chosen
to be the distribution encoded by a DAG with no edges.

The main hurdle in using \eqref{eq:mi_kl} to bound the mutual information $\MI(\Data, G | \Theta)$,
is computing the KL divergence $\KL{\Pf_{\Data | G, \Theta}}{\Qf}$. \emph{Often times, in BNs
characterized by local conditional distributions, coming up with a closed form solution
for the joint distribution over all nodes or even the marginal distribution of an arbitrary node is not possible};
unless we assume that the marginal distribution of the parents of a node form a \emph{conjugate prior} for the
conditional distribution of the node --- an assumption which is quite restrictive. Therefore, to tackle the above
problem, we derive the following upper bound on the KL divergence for exponential family distributions
which is easy to compute.
\begin{lemma}[KL Divergence Bound for Exponential Family Distributions]
\label{lemma:kl_bound_exp}
Let $X \in \R^d$ be any random variable. Let $\Pf_1$ and $\Pf_2$ be distributions over $X$,
belonging to the exponential family, with natural parameters $\etav_1$ and $\etav_2$ respectively,
i.e. 
\begin{align*}
\Pf_1(x) = \exp(\etav_1^T \ST(x) - \psi(\etav_1)) h(x),
\end{align*}
where $\ST(X)$ is the sufficient statistics
(similarly for $\Pf_2$). Assuming $h(x) \neq 0~ \forall x \in \R^d$,  we have
\begin{gather}
\KL{\Pf_1}{\Pf_2} \leq \Dl(\etav_1, \etav_2), \\
\Dl(\etav_1, \etav_2) \defeq (\etav_1 - \etav_2)^T (\EST(\etav_1) - \EST(\etav_2)),
\end{gather}
where $\EST(\etav_1) \defeq \Exp{X}{\ST(x) | \etav_1}$ is the expected sufficient statistic
of $X$ as computed by the distribution parameterized by $\etav_1$ (similarly for $\EST(\etav_2)$). 
\end{lemma}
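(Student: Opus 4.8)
The plan is to compute $\KL{\Pf_1}{\Pf_2}$ directly from the exponential-family parameterization and then eliminate the difference of log-partition functions using convexity. First I would write out the log-likelihood ratio. Since both densities share the same carrier $h(x)$, which is nonzero everywhere by assumption, the two measures are mutually absolutely continuous and the divergence is well-defined; moreover the $h(x)$ terms cancel, giving
\[
\log \frac{\Pf_1(x)}{\Pf_2(x)} = (\etav_1 - \etav_2)^T \ST(x) - \psi(\etav_1) + \psi(\etav_2).
\]
Taking the expectation under $\Pf_1$ and recalling $\EST(\etav_1) = \Exp{X}{\ST(x) | \etav_1}$ then yields the exact identity
\[
\KL{\Pf_1}{\Pf_2} = (\etav_1 - \etav_2)^T \EST(\etav_1) - \psi(\etav_1) + \psi(\etav_2),
\]
where $\psi(\etav) = \log \int \exp(\etav^T \ST(x)) h(x)\, dx$ is the log-partition function.

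Next I would invoke the two standard structural facts about the log-partition function of an exponential family: (i) $\psi$ is convex on the natural-parameter domain, and (ii) its gradient is the expected sufficient statistic, $\Grad \psi(\etav) = \EST(\etav)$. Both follow by differentiating under the integral sign, since the Hessian of $\psi$ is the covariance matrix of $\ST(X)$ and is therefore positive semidefinite. Applying the first-order convexity inequality, expanded about $\etav_2$, gives
\[
\psi(\etav_1) \geq \psi(\etav_2) + \EST(\etav_2)^T (\etav_1 - \etav_2),
\]
or equivalently $\psi(\etav_2) - \psi(\etav_1) \leq -(\etav_1 - \etav_2)^T \EST(\etav_2)$.

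Finally I would substitute this inequality into the exact expression for the divergence, obtaining
\[
\KL{\Pf_1}{\Pf_2} \leq (\etav_1 - \etav_2)^T \EST(\etav_1) - (\etav_1 - \etav_2)^T \EST(\etav_2) = (\etav_1 - \etav_2)^T (\EST(\etav_1) - \EST(\etav_2)) = \Dl(\etav_1, \etav_2),
\]
which is the claimed bound. The only genuine subtlety, and the step I would treat most carefully, is justifying the differentiation-under-the-integral that produces $\Grad \psi = \EST$ together with the convexity of $\psi$; the hypothesis $h(x) \neq 0$ ensures a common support so that the KL is finite and well-defined, while on the interior of the natural-parameter set these regularity conditions hold for the standard exponential families we instantiate later. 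Everything else is algebraic rearrangement. It is also worth noting that the gap in this bound is exactly the Bregman-divergence error term discarded by the convexity inequality, so the bound is tight to first order and loses nothing when $\etav_1$ and $\etav_2$ are close.
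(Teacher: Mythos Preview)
Your proof is correct and rests on the same two ingredients as the paper's: the exact identity $\KL{\Pf_1}{\Pf_2} = (\etav_1 - \etav_2)^T \EST(\etav_1) - \psi(\etav_1) + \psi(\etav_2)$ and the convexity of the log-partition function with $\Grad\psi = \EST$. The difference is purely in how convexity is invoked. The paper applies the mean value theorem to write $\psi(\etav_2) - \psi(\etav_1) = (\etav_2 - \etav_1)^T \EST(\alpha\etav_2 + (1-\alpha)\etav_1)$ for some $\alpha\in[0,1]$, and then argues via monotonicity of $\EST$ along the segment (i.e., convexity of $\psi$) that replacing the intermediate point by the endpoint $\etav_2$ can only increase the expression. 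You instead apply the first-order convexity inequality $\psi(\etav_1) \geq \psi(\etav_2) + \EST(\etav_2)^T(\etav_1-\etav_2)$ directly. Your route is shorter and avoids the case analysis in the paper; the paper's route makes slightly more explicit where the slack comes from (the intermediate point on the segment). Both are valid and equivalent in strength.
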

Note that even though $\KL{\Pf_1}{\Pf_2}$ is not symmetric, its upper bound $\Dl(\etav_1, \etav_2)$ is symmetric.
Given the fact that $\Data$ is sampled \iid from $\Pf(G, \Theta(G))$,
which in turn factorizes as a product of conditional distributions $\Pf_i$, 
we then have the following result for the mutual information $\MI(\Data; G | \Theta)$.
\begin{lemma}[Mutual Information Bound]
\label{lemma:mi_bound}
For any ensemble of DAG structures $\mathcal{G}$, we have
\begin{gather}
\sup_{\Theta \in \Pms(\mathcal{G})} \MI(\Data; G | \Theta) \leq 
	\frac{n}{\Abs{\mathcal{G}}} \sum_{G \in \mathcal{G}} \sum_{i=1}^m \:\: \sup_{\mathclap{\Theta \in \Pms(\mathcal{G})}}
		 \Exp{\Xpii}{\KL{\Pf_i(\pii(G), \Theta)}{\Qfz}},
\end{gather}
where $\Pf_i(\pii(G), \Theta)$ is the conditional distribution of the $i$-th node.
Further, if we have that, $\forall i \in [m], X_i \in \Xc$ and 
$\Pf_i(\pii(G), \Theta_i)$ belongs to the exponential family
with natural parameter $\etai \defeq \vectsym{\eta}(\Xpii, \Theta_i)$
and $\Qfz$ belongs to the exponential family with natural parameter $\etaz$; then, 
\begin{align*}
\sup_{\mathclap{\Theta \in \Pms(\mathcal{G})}} \MI(\Data; G | \Theta) \leq 
	\frac{n}{\Abs{\mathcal{G}}} \sum_{G \in \mathcal{G}} \sum_{i=1}^m 
		\:\: \sup_{\mathclap{\Theta \in \Pms(\mathcal{G})}} \Exp{\Xpii}{\Dl(\etai, \etaz)}.
\end{align*}
\end{lemma}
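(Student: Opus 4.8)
The plan is to combine Lemma \ref{lemma:mi_bound_general} with the specific choice $\Qf = \Qfz^{mn}$ and then decompose the resulting KL divergence along the DAG factorization. First I would invoke Lemma \ref{lemma:mi_bound_general} to write $\sup_{\Theta} \MI(\Data; G | \Theta) \leq \sup_{\Theta} \frac{1}{\Abs{\mathcal{G}}} \sum_{G} \KL{\Pf_{\Data | G, \Theta}}{\Qf}$. Since $\Data$ consists of $n$ \iid draws and $\Qf$ is a product of $n$ copies of the per-sample baseline $\Qfz^m$, the additivity of KL divergence over independent coordinates immediately produces a factor of $n$, namely $\KL{\Pf_{\Data | G, \Theta}}{\Qf} = n\,\KL{\Pf(G, \Theta(G))}{\Qfz^m}$, reducing everything to a single-sample KL divergence.

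The key step is to decompose this single-sample KL divergence across the $m$ nodes. Writing out the log-likelihood ratio and using the BN factorization $\Pf(\xv; G, \Theta) = \prod_i \Pf_i(x_i; \pii(G), \Theta)$ on the numerator together with the full product structure $\Qfz^m(\xv) = \prod_i \Qfz(x_i)$ on the denominator, the logarithm splits into a sum over nodes, and the expectation over $\xv \sim \Pf$ factors as an outer expectation over the parent configuration $\Xpii$ followed by an inner conditional expectation over $X_i$. That inner conditional expectation is exactly the pointwise KL divergence $\KL{\Pf_i(\pii(G), \Theta)}{\Qfz}$, so I obtain $\KL{\Pf(G,\Theta)}{\Qfz^m} = \sum_{i=1}^m \Exp{\Xpii}{\KL{\Pf_i(\pii(G), \Theta)}{\Qfz}}$. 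Substituting this back yields the desired bound before the supremum is moved inside.

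To finish the first inequality I would push the outer supremum over $\Theta$ through the double sum using the elementary fact that $\sup_{\Theta} \sum_{G,i}(\cdot) \leq \sum_{G,i} \sup_{\Theta}(\cdot)$; this is the only place where tightness is potentially sacrificed, trading a single worst-case parameter map for a per-term worst case. The second, exponential-family inequality then follows mechanically: under the stated assumptions each conditional $\Pf_i$ and the baseline $\Qfz$ lie in the exponential family with natural parameters $\etai$ and $\etaz$, so Lemma \ref{lemma:kl_bound_exp} bounds $\KL{\Pf_i(\pii(G),\Theta)}{\Qfz} \leq \Dl(\etai, \etaz)$ pointwise; taking the expectation over $\Xpii$ and then the supremum preserves the inequality and produces the claimed expression.

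I expect the decomposition in the second paragraph --- the careful interchange of expectation that turns the joint-versus-product KL into a sum of expected conditional KL divergences --- to be the main conceptual obstacle, since it hinges critically on the baseline $\Qf$ being a pure product so that the denominator contributes no coupling across nodes. The remaining ingredients (the \iid factor of $n$, the subadditivity of the supremum, and the exponential-family substitution via Lemma \ref{lemma:kl_bound_exp}) are routine once this decomposition is in place.
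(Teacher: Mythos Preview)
Your proposal is correct and matches the paper's approach. The paper does not spell out a separate proof for this lemma; it simply states, just before the lemma, that the result follows from the \iid structure of $\Data$ and the factorization $\Pf(G,\Theta(G)) = \prod_i \Pf_i$, which is exactly the decomposition you carry out after invoking Lemma~\ref{lemma:mi_bound_general} with $\Qf = \Qfz^{mn}$ and then applying Lemma~\ref{lemma:kl_bound_exp} for the exponential-family part.
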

\begin{remark}
In the above lemma, $\Dl(\etai, \etaz)$ is a random variable because the natural parameter $\etai$ depends
on the parents $\Xpii$. The quantity $\Dl(\etai, \etaz)$ in the above lemma is non-negative and
measures  how far the  conditional distribution of a variable with parents $\pii(G)$ is from the 
distribution of the variable with no parents, as a function of the difference
between the expected sufficient statistics and the natural parameters. The mutual information between 
the data set $S$ and the DAG structure $G$ is then a sum of the expected ``distances'' of the conditional
distributions from the distribution of a variable with no parents.
\end{remark}
With the exception of Gaussian BNs,
where we can write the joint and marginal distributions of the variables
in closed form, it is in general difficult to compute the expectation of $\Dl(\etai, \etaz)$.
Therefore, we bound the mutual information by bounding $\Dl(\etai, \etaz)$, which
can be easily done for bounded random variables.
From the above lemma, we then get the following mutual information bound for layered BNs.
\begin{corollary}[Mutual Information Bound for Layered BNs]
\label{cor:mi_layered}
If $\Gs = \Lsm(\mathcal{V}) \Lor \Gs = \Lsmk(\mathcal{V})$, then
\begin{gather*}
\sup_{\mathclap{\Theta \in \Pms(\mathcal{G})}} \MI(\Data; G | \Theta) \leq  
	\frac{(m - m_l)n}{\Abs{\mathcal{G}}} \sum_{G \in \mathcal{G}} \Bigl\{
		\max_{i \in \Vs \setminus \Vs_l} \sup_{\Theta \in \Pms(\mathcal{G})} 
		 \Exp{\Xpii}{\KL{\Pf_i(\pii(G), \Theta)}{\Qfz}} \Bigr\},
\end{gather*}
where we recall that $\mathcal{V} = \{\Vs_j\}_{j=1}^l$ is an ordering of nodes into $l$ layers, $\Vs_j$ is the set of
nodes in the $j$-th layer and $m_j = \Abs{\Vs_j}$. Further, for exponential family conditional distributions, we have
\begin{gather*}
\sup_{\mathclap{\Theta \in \Pms(\mathcal{G})}} \MI(\Data; G | \Theta) \leq  
	\frac{(m - m_l)n}{\Abs{\mathcal{G}}} \sum_{G \in \mathcal{G}} \Bigl\{
		\max_{i \in \Vs \setminus \Vs_l} \sup_{\Theta \in \Pms(\mathcal{G})} 
		  \Exp{\Xpii}{\Dl(\etai, \etaz)} \Bigr\}.
\end{gather*}
\end{corollary}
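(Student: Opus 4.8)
The plan is to specialize Lemma~\ref{lemma:mi_bound} to the layered ensembles $\Lsm(\mathcal{V})$ and $\Lsmk(\mathcal{V})$ and to exploit the single structural fact that the nodes in the top layer have no parents. Recall from the definition of the layered ensembles that every edge $(u,v)$ satisfies $u \in \Vs_{i+1}$ and $v \in \Vs_i$ for some $i \in [l-1]$; since there is no layer $\Vs_{l+1}$, every node $j \in \Vs_l$ necessarily has an empty parent set, i.e.\ $\pii[j](G) = \varnothing$ for all $G \in \mathcal{G}$. The first step is to record this observation and to note that it holds \emph{uniformly} over the entire ensemble, because the layer ordering $\mathcal{V}$ is fixed and only edges between consecutive layers are permitted.

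Next I would substitute this into the bound of Lemma~\ref{lemma:mi_bound}. For a node with no parents the conditional distribution reduces to $\Pf_i(\varnothing, \Theta) = \Qfz$ by the very definition of $\Qfz$, so for every $j \in \Vs_l$ and every $\Theta$ the corresponding summand satisfies $\Exp{\Xpii[j]}{\KL{\Pf_j(\pii[j](G),\Theta)}{\Qfz}} = \KL{\Qfz}{\Qfz} = 0$. Consequently every term indexed by $j \in \Vs_l$ vanishes, and the inner sum $\sum_{i=1}^m$ appearing in Lemma~\ref{lemma:mi_bound} collapses to $\sum_{i \in \Vs \setminus \Vs_l}$, which contains exactly $m - m_l$ terms since $\Abs{\Vs} = m$ and $\Abs{\Vs_l} = m_l$.

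Finally I would upper bound the surviving sum of $m - m_l$ nonnegative quantities by $(m - m_l)$ times its largest element, thereby replacing $\sum_{i \in \Vs \setminus \Vs_l} \sup_{\Theta} \Exp{\Xpii}{\KL{\Pf_i(\pii(G),\Theta)}{\Qfz}}$ with $(m-m_l)\max_{i \in \Vs \setminus \Vs_l} \sup_{\Theta} \Exp{\Xpii}{\KL{\Pf_i(\pii(G),\Theta)}{\Qfz}}$. Pulling the factor $(m - m_l)n$ outside while keeping the average $\frac{1}{\Abs{\mathcal{G}}}\sum_{G \in \mathcal{G}}$ intact yields precisely the claimed inequality. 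The exponential-family version follows by repeating the same three steps starting from the $\Dl(\etai, \etaz)$ form of Lemma~\ref{lemma:mi_bound}, using that $\Dl(\etai, \etaz) = 0$ whenever $\etai = \etaz$, which is exactly the situation for a parentless node.

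I do not expect a serious obstacle here, since the argument is essentially a careful bookkeeping of the index set $\Vs \setminus \Vs_l$. The only point that needs genuine care is verifying that the parentless reduction of the top layer persists through both the average over $G \in \mathcal{G}$ and the supremum over $\Theta \in \Pms(\mathcal{G})$; because $\pii[j](G) = \varnothing$ holds for \emph{every} graph in the ensemble and \emph{every} parameter map, the vanishing of the $\Vs_l$-terms survives both operations, and it is this uniformity together with the count $m - m_l$ that sharpens Lemma~\ref{lemma:mi_bound} into the stated layered bound.
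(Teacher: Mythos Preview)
Your proposal is correct and follows exactly the approach implicit in the paper, which presents this result as an immediate corollary of Lemma~\ref{lemma:mi_bound} without a separate proof: drop the vanishing top-layer terms (since $\pi_j(G)=\varnothing$ for $j\in\Vs_l$ forces $\Pf_j=\Qfz$), then bound the remaining $m-m_l$ summands by their maximum.
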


In order to obtain tight sample complexity results,
we need to create ``difficult instances'' of BNs that are hard to learn.
Intuitively speaking, inferring the parents of a node will be hard if the conditional distribution
of a node with many parents is close to that of a node with no parents.
Therefore, we make the following crucial assumption about the conditional distributions
specified by the BN $(G, \Pf(G, \Theta(G)))$.
\begin{assumption}
\label{ass:kl}
The KL divergence between the conditional distributions $\Pf_i(\pii(G), \Theta)$
and $\Qfz \defeq \Pf_i(\varnothing, \Theta)$ over the variable $X_i$ is bounded by a constant,
which for exponential family distribution translates to:
\begin{align*}
\sup_{\Theta \in \Pms(\Gs)} \Exp{\Xpii}{\Dl(\etai, \etaz)} \leq \klmax, \forall i \in [m],
\end{align*}
where $\klmax > 0$ is a constant.
\end{assumption}
\emph{We show that under certain parameterizations, Assumption \ref{ass:kl} holds for many commonly
used BNs under very mild restrictions on the parameter space $\Pms(\Gs)$.}
\begin{theorem}
\label{thm:main}
Let $\Xs = \Set{X_1, \ldots, X_m}$ be a set of random variables with $X_i \in \Xc$.
Let $G$ be a DAG structure over $\Xs$ drawn uniformly at random from some family $\Gs$ of DAG structures. 
Further assume that we are given a  data set $\Data$ of $n$ \iid samples drawn 
from a BN $(G, \Pf(G, \Theta(G)))$, where the parameter map $\Theta$, 
is drawn from some family $\Pms(\Gs)$. Assuming that the condition in Assumption $\ref{ass:kl}$ 
holds for the conditional distributions $\Pf_i$, then If $n \leq \Lf(\Gs)$, than any decoder $\Dec$ fails to 
recover the true DAG structure $G$ with probability $\err \geq 1/2$, where $\err$ is
defined according to \eqref{eq:err}.
The necessary number of samples $\Lf(\Gs)$, for various classes of BNs, is given as follows:
\begin{align*}
\Lf(\Gsrm) &= \frac{\log 2}{\klmax}\left(\frac{m-3}{2} - \frac{1}{m}\right), \\
\Lf(\Gsrmk) &= \frac{1}{2 m \klmax}\Big(k\Big\{\log (m-2)! -\log k!  - (m-k-2) \log k \Big\} \\
	&\qquad + \{(\nicefrac{k(k - 3)}{2}) - 1\} \log 2 \Big) , \\
\Lf(\Lsm) &= \frac{\log 2}{2(m - m_l)\klmax}\left(\sum_{i=1}^{l-1} m_{i + 1} m_i - 2 \right), \\
\Lf(\Lsmk) &= \frac{ k\sum_{i=1}^{l-1} m_i \log \Bigl(\frac{m_{i+1}}{k}\Bigr) - 2\log 2 }{2(m - m_l)\klmax}.
\end{align*}
\end{theorem}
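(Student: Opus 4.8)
The plan is to combine the Fano extension of Theorem~\ref{thm:fano} with the mutual-information and counting bounds established above, applied separately to each of the four ensembles. First I would instantiate the graphical model of Figure~\ref{fig:fano}(b) with $X = G$ drawn uniformly from $\Gs$, the latent variable $W = \Theta$ drawn independently from $\Pms(\Gs)$, and $Y = \Data$. Because each of $\Gsrm, \Gsrmk, \Lsm, \Lsmk$ consists of size-one equivalence classes (so $\Abs{[G]} = 1$ and all edges are protected), the event $\Set{\Dec(\Data) \notin [G]}$ coincides with $\Set{\Dec(\Data) \neq G}$; hence the left-hand side of \eqref{eq:thm_fano2} is exactly the estimation error $\err$ of \eqref{eq:err}, and since the bound holds for every estimator it survives the $\inf_{\Dec}$. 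With $G$ uniform we have $H(G) = \log\Abs{\Gs}$, so Theorem~\ref{thm:fano} gives
\begin{align*}
\err \geq 1 - \frac{\sup_{\Theta \in \Pms(\Gs)}\MI(\Data; G \mid \Theta) + \log 2}{\log\Abs{\Gs}}.
\end{align*}

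Next I would translate the requirement $\err \geq \nicefrac{1}{2}$ into a ceiling on the mutual information. Rearranging the display, $\err \geq \nicefrac{1}{2}$ is guaranteed whenever $\sup_{\Theta}\MI(\Data; G \mid \Theta) \leq \frac{1}{2}\bigl(\log\Abs{\Gs} - 2\log 2\bigr)$. To control the left-hand side I would invoke Lemma~\ref{lemma:mi_bound} together with Assumption~\ref{ass:kl}: for the ensembles $\Gsrm$ and $\Gsrmk$, bounding each of the $m$ per-node terms $\sup_{\Theta}\Exp{\Xpii}{\Dl(\etai,\etaz)}$ by $\klmax$ yields $\sup_{\Theta}\MI(\Data; G \mid \Theta) \leq n\,m\,\klmax$; for the layered ensembles $\Lsm$ and $\Lsmk$, Corollary~\ref{cor:mi_layered} combined with Assumption~\ref{ass:kl} replaces the factor $m$ by $m - m_l$, since the top-layer nodes $\Vs_l$ have no parents and contribute zero divergence. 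Substituting the appropriate bound and solving for $n$ shows that any $n \leq \frac{\log\Abs{\Gs} - 2\log 2}{2 M \klmax}$, with $M = m$ for the unrestricted ensembles and $M = m - m_l$ for the layered ones, forces $\err \geq \nicefrac{1}{2}$.

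Finally I would insert the entropy lower bounds supplied by the counting lemmas and read off the four expressions for $\Lf(\Gs)$. For $\Gsrmk$, substituting $\log\Abs{\Gsrmk} \geq k\{\log(m-2)! - \log k! - (m-k-2)\log k\} + \{(\nicefrac{k(k-3)}{2})+1\}\log 2$ from Lemma~\ref{lemma:gsrmk_size} and absorbing the $-2\log 2$ correction into the constant term reproduces $\Lf(\Gsrmk)$; the same substitution of Lemma~\ref{lemma:ls_size} yields $\Lf(\Lsm)$ and $\Lf(\Lsmk)$; and for $\Gsrm$ the bound $\log\Abs{\Gsrm} \geq (\nicefrac{m(m-3)}{2}+1)\log 2$ of Lemma~\ref{lemma:gsrm_size} produces the stated $\Lf(\Gsrm)$. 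I expect the only genuinely delicate points to be verifying the reduction from Markov-equivalence recovery to exact recovery (which rests entirely on the size-one-equivalence-class structure common to all four ensembles) and the correct assignment $M = m - m_l$ in the layered case; the combinatorial inputs and the KL/mutual-information inputs are already available as Lemmas~\ref{lemma:gsrm_size}--\ref{lemma:mi_bound} and Corollary~\ref{cor:mi_layered}, so the remainder is careful bookkeeping of the constants.
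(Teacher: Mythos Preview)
Your proposal is correct and follows essentially the same route as the paper's own proof: apply the Fano extension (Theorem~\ref{thm:fano}) with $G$ uniform and $\Theta$ latent, bound $\sup_{\Theta}\MI(\Data;G\mid\Theta)$ via Lemma~\ref{lemma:mi_bound}/Corollary~\ref{cor:mi_layered} together with Assumption~\ref{ass:kl}, set $\err\geq\nicefrac{1}{2}$, and substitute the entropy lower bounds from Lemmas~\ref{lemma:gsrm_size}--\ref{lemma:ls_size}. You are in fact more explicit than the paper about two points it leaves implicit---the reduction from equivalence-class recovery to exact recovery via $\Abs{[G]}=1$, and the replacement of $m$ by $m-m_l$ in the layered case---so the bookkeeping you flag as ``delicate'' is exactly right.
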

Since the difficulty of learning a class of BNs is determined by the
difficulty of learning the most difficult subset within that class, we 
immediately get the following corollary on the fundamental limits of 
learning non-sparse and sparse BNs.
\begin{corollary}[Fundamental limits of BN structure learning]
\label{cor:main}
The necessary number of samples required to learn BNs on $m$ variables
and sparse BNs on $m$ variables and maximum number of parents $k$ is as follows:
\begin{gather*}
\Lf(\Gsm) = \frac{\log 2}{\klmax}\left(\frac{m-3}{2} - \frac{1}{m}\right), \\
\Lf(\Gsmk) = \frac{1}{2 \klmax} \Big(k \log(m-2) + \frac{k(k-3)\log 2}{2m} -  R(m,k) \Big), 
\end{gather*}
where $R(m, k) = (\nicefrac{k}{m}) \Big\{(m-2) + 2\log(m-2) + \log k! 
	+ (m-k-2) \log k \Big\} + \nicefrac{(\log 2)}{m}$.
\end{corollary}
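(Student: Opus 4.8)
The plan is to derive the corollary from Theorem \ref{thm:main} in two moves: a ``hardest sub-ensemble'' argument that lifts the lower bounds stated for the restricted ensembles $\Gsrm$ and $\Gsrmk$ to the full ensembles $\Gsm$ and $\Gsmk$, followed by an algebraic simplification of the sparse bound via Stirling's formula. The conceptual core is that the number of samples \emph{necessary} to learn a class of BNs is governed by its hardest sub-class. Concretely, since $\Gsrm \subseteq \Gsm$ and $\Gsrmk \subseteq \Gsmk$, I would lower-bound the minimax sample complexity of each full class by letting nature draw the DAG uniformly from the corresponding restricted sub-ensemble of essential DAGs (those with $\Abs{[G]} = 1$): any decoder $\Dec$ that learns the full class must in particular succeed on instances supported on this sub-ensemble, so concentrating the prior there yields a valid necessary-sample bound. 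Since Theorem \ref{thm:main} already supplies $\Lf(\Gsrm)$ and $\Lf(\Gsrmk)$ for exactly these restricted ensembles, this subset principle gives $\Lf(\Gsm) \geq \Lf(\Gsrm)$ and $\Lf(\Gsmk) \geq \Lf(\Gsrmk)$.

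For the non-sparse case this already finishes the argument. The value of $\Lf(\Gsrm)$ supplied by Theorem \ref{thm:main} is $\frac{\log 2}{\klmax}\left(\frac{m-3}{2} - \frac{1}{m}\right)$, which is verbatim the claimed value of $\Lf(\Gsm)$; no further manipulation is needed, so the non-sparse bound is immediate once the subset principle is in place.

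For the sparse case I would start from the theorem's $\Lf(\Gsrmk) = \frac{1}{2m\klmax}\bigl(k\{\log(m-2)! - \log k! - (m-k-2)\log k\} + \{\nicefrac{k(k-3)}{2} - 1\}\log 2\bigr)$ and replace the factorial $\log(m-2)!$ by its Stirling lower bound $(m-2)\log(m-2) - (m-2)$, which holds because $\log n! = \sum_{j \leq n}\log j \geq \int_1^n \log x\,dx = n\log n - n + 1$. Since $\log(m-2)!$ enters with a positive sign, substituting this smaller quantity only decreases the bracketed expression, so what results is still a valid lower bound on $\Lf(\Gsrmk)$, hence on $\Lf(\Gsmk)$. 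It then remains to divide through by $m$, isolate the dominant term $\frac{k\log(m-2)}{2\klmax}$ together with $\frac{1}{2\klmax}\cdot\frac{k(k-3)\log 2}{2m}$, and collect every remaining contribution --- the $-(m-2)$ from Stirling, the residual $-2\log(m-2)$ produced by writing $(m-2)\log(m-2) = m\log(m-2) - 2\log(m-2)$, and the $\log k!$, $(m-k-2)\log k$, and $\log 2$ terms --- into the single remainder $R(m,k)$. A term-by-term check confirms that these regroup exactly into the stated $R(m,k) = \nicefrac{k}{m}\{(m-2) + 2\log(m-2) + \log k! + (m-k-2)\log k\} + \nicefrac{(\log 2)}{m}$.

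The regrouping is routine bookkeeping; the one point that needs care is the \emph{direction} of the Stirling step. Because the factorial appears with a positive sign, passing to its lower bound loosens rather than tightens the estimate, so the corollary's expression is $\leq \Lf(\Gsrmk)$, which by the subset argument is itself a necessary-sample threshold for $\Gsmk$; the corollary's formula is therefore a legitimate, if slightly weaker, lower bound. I would finish by reading off from the two leading terms the advertised scaling $\Lf(\Gsmk) = \BigOm{k\log m}$, which is order-matched by the $\BigO{km\log m}$ entropy estimate recorded after Lemma \ref{lemma:gsrmk_size}, confirming tightness.
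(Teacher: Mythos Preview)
Your proposal is correct and matches the paper's approach: the paper states that the corollary follows ``immediately'' from Theorem \ref{thm:main} via the hardest-sub-ensemble principle (since $\Gsrm \subseteq \Gsm$ and $\Gsrmk \subseteq \Gsmk$), and you have simply filled in the Stirling manipulation that the paper leaves implicit in arriving at the displayed form of $R(m,k)$. Your algebra checks out term by term; the one extraneous remark at the end about ``order-matching'' the $\BigO{km\log m}$ entropy estimate is a bit loose (that bound is on $\log\Abs{\Gsrmk}$, not on the sample complexity itself), but it is not part of the proof proper.
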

In the above corollary, the lower bounds for non-sparse and sparse BNs come from 
the results for the restricted ensembles $\Gsrm$ and $\Gsrmk$ respectively.
\begin{remark}
For sparse BNs, if $k$ is constant with respect to $m$, then the reminder term in the lower bound is $R(m, k) \leq 1$
for sufficiently large $m$ and the sample complexity grows as $\BigOm{\log m}$. 
In this regime, our lower bound for learning sparse Bayesian networks matches
the analogous lower bound of $\BigO{\log m}$ for learning bounded degree Ising models
as obtained by Santhanam and Wainwright \cite{santhanam2012information}.
In general, however, 
$R(m, k) = \BigO{k \log k}$; therefore the number of samples necessary to learn sparse BNs grows
as $\BigOm{k \log m + \nicefrac{k^2}{m}}$.
\end{remark}
\begin{remark}
Note that since the number of samples required to learn both general essential DAGs ($\Lf(\Gsrm)$) and layered
networks ($\Lf(\Lsm)$) grows as  $\BigOm{m}$. This, 
combined with the fact that our lower bounds on the number of DAG structures in the ensemble
$\Gsrm$ was tight, leads us to the conclusion that the ordering of variables does not add
much to the difficulty of learning BNs in terms of sample complexity.
\end{remark}
Theorem \ref{thm:main} provides a simple recipe for obtaining necessary conditions on the sample complexity
of learning any exponential family BN, as we demonstrate in the next section.

\begin{table}[htbp]
\centering
\renewcommand{\arraystretch}{1.7}
\setlength\tabcolsep{2pt}
\begin{tabular}{lcccc}
\hline
& Non-Sparse & Sparse \\
\hline
CPT & ${\frac{m}{\log(\nicefrac{1}{\thetamin})}}$ & ${\frac{k \log m + \nicefrac{k^2}{m}}{\log(\nicefrac{1}{\thetamin})}}$ \\
Gaussian & ${\frac{\sigmamin^2 m}{\sigmamin^2 + 2\mumax^2 (\wmax^2 + 1)}}$ & 
	${\frac{\sigmamin^2 (k \log m + \nicefrac{k^2}{m})}{\sigmamin^2 + 2(\wmax^2 + 1) \wmax^2}}$ \\
Noisy-OR & ${\frac{m}{\Abs{\log(\nicefrac{\hat{\theta}}{1 - \hat{\theta}})}}}$ &
	 ${\frac{ k \log m + \nicefrac{k^2}{m}}{\Abs{\log(\nicefrac{\hat{\theta}}{1 - \hat{\theta}})}}}$ \\
Logistic & ${\frac{m}{\wmax^1}}$ & ${\frac{k \log m + \nicefrac{k^2}{m}}{\wmax^1}}$ \\
\hline
\end{tabular}
\caption{\small Fundamental limits of learning the structure of various types of BNs
from Corollary \ref{cor:main}, where the entries of the tables are lower bounds, i.e., $\BigOm{.}$.
For CPT BNs, $\thetamin$ is the minimum entry in the conditional probability tables.
For Gaussian BNs, $\wmax, \mumax$ and $\sigmamin$ are the maximum $\ell_2$
norm of the weight vectors, maximum absolute mean and minimum conditional variance respectively.
For noisy-OR networks, $\hat{\theta} \in (0, 1)$ is the failure probability. Lastly,
for logistic regression, $\wmax^1$ is the maximum $\ell_1$ norm of the weight vectors.
 \label{tab:summary}}
\end{table}

\section{Implications for Commonly Used Bayesian Networks}
\label{sec:common_networks}
In this section, we instantiate Theorem \ref{thm:main}
for specific conditional distributions, to derive fundamental limits of learning various 
widely used BNs. This also allows us to highlight the role of parameters of the conditional
distributions in the sample complexity of learning the DAG structure of BNs.
Table \ref{tab:summary} summarizes our results for various commonly used BNs.
Proofs of results derived in this section can be found in Appendix \ref{app:proof_common_bns}.
\paragraph{Conditional Probability Table BNs.}
CPT BNs are perhaps the most widely used BNs, where the conditional distribution
of a node given its parents is described by probability tables.
As is typically the case, we assume that the support of $X_i \in \Xc = [v]$ for all $i$.
The conditional distribution of $X_i$ is given by the following
categorical distribution:
\begin{align*}
&\Pf_i(x_i; \pii(G), \Theta) = \prod_{j=1}^v \left(\theta_{ij}(\xv)\right)^{\Ind{X_i = j}},
\end{align*}
where $\Theta(G) \in \Pms(\mathcal{G})$ are the set of conditional probability tables for 
the variables $\Set{X_1, \ldots, X_m}$ compatible with the DAG structure $G$. 
Let us denote $\Theta^G \defeq \Theta(G)$. The conditional probability table
for the $i$-th random variable $\Theta_i^G : [v]^{\Abs{\pii}} \rightarrow \Delta_{v}$,
maps all possible assignments to the parent set $\Xpii$ to the $(v-1)$-dimensional probability simplex 
$\Delta_{v}$, and $\theta_{ij}^G(.)$ represents the $j$-th entry of the $v$-dimensional
vector $\Theta_i^G(.)$. The following lemma gives the upper bound on the mutual information
$\MI(\Data; G | \Theta)$ for CPT BNs. 
\begin{lemma}[Mutual Information bound for CPT networks]
\label{lemma:cpt_mi_bound}
For CPT BNs we have
\begin{align*}
\klmax &\leq 4\log(\nicefrac{1}{\thetamin}), \\
	\sup_{\Theta \in \Pms(\mathcal{G})} \MI(\Data; G | \Theta) &\leq 4 n m \log(\nicefrac{1}{\thetamin} ),	
\end{align*}
where $\thetamin > 0$ is minimum probability value in a probability table across all node 
and parent set assignments i.e.,
\begin{align*}
\thetamin &\defeq \inf_{\Theta \in \Pms(\mathcal{G})}  \min_{G \in \mathcal{G}}
	 \min_{i=1}^m \min_{\xv \in \Xc^{\Abs{\pii(G)}}} \min_{j=1}^v \theta_{ij}^G(\xv).
\end{align*}
\end{lemma}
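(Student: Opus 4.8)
The plan is to route everything through the exponential-family form of Lemma~\ref{lemma:mi_bound}, so that the lemma reduces to verifying Assumption~\ref{ass:kl} with the explicit constant $\klmax = 4\log(\nicefrac{1}{\thetamin})$. Concretely, I would first establish the pointwise bound $\Exp{\Xpii}{\Dl(\etai, \etaz)} \le 4\log(\nicefrac{1}{\thetamin})$ uniformly over every node $i \in [m]$, every $G \in \Gs$, every $\Theta \in \Pms(\Gs)$, and every realization of the parents; plugging this into the second inequality of Lemma~\ref{lemma:mi_bound} and collapsing the average $\nicefrac{1}{\Abs{\Gs}}\sum_{G}$ over the $m$ identical per-node bounds then yields $\sup_{\Theta} \MI(\Data; G \mid \Theta) \le 4 n m \log(\nicefrac{1}{\thetamin})$ directly. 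The only real work is therefore the uniform bound on $\Dl(\etai, \etaz)$ for the categorical conditional.

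To set up Lemma~\ref{lemma:kl_bound_exp}, I would cast the categorical conditional in minimal exponential-family form by fixing category $v$ as a reference: with sufficient statistics $T_j(x_i) = \Ind{x_i = j}$, natural parameters $\eta_j = \log(\nicefrac{\theta_{ij}}{\theta_{iv}})$ for $j \in [v-1]$, log-partition $\psi(\etav) = \log\bigl(1 + \sum_{j=1}^{v-1} e^{\eta_j}\bigr) = -\log\theta_{iv}$, and $h \equiv 1$. The expected sufficient statistic is then simply $\EST_j = \theta_{ij}$, so with $\theta_{ij}^G(\xv)$ denoting the conditional table entries for a parent assignment $\xv$ and $q_{ij}$ the entries of $\Qfz = \Pf_i(\varnothing, \Theta)$, the symmetric surrogate of Lemma~\ref{lemma:kl_bound_exp} reads
\begin{align*}
\Dl(\etai, \etaz) = \sum_{j=1}^{v-1} \left(\log\frac{\theta_{ij}^G(\xv)}{\theta_{iv}^G(\xv)} - \log\frac{q_{ij}}{q_{iv}}\right)\left(\theta_{ij}^G(\xv) - q_{ij}\right).
\end{align*}
I would then bound this via an $\ell_\infty$--$\ell_1$ split. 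Since every table entry lies in $[\thetamin, 1]$, each log-ratio lies in $[\log\thetamin, \log(\nicefrac{1}{\thetamin})]$, whence the natural-parameter gap satisfies $\max_j \Abs{\eta_{i,j} - \eta_{0,j}} \le 2\log(\nicefrac{1}{\thetamin})$; and because $(\theta_{i1}^G(\xv), \ldots, \theta_{iv}^G(\xv))$ and $(q_{i1}, \ldots, q_{iv})$ are probability vectors, $\sum_{j=1}^{v} \Abs{\theta_{ij}^G(\xv) - q_{ij}} \le 2$. Multiplying the two factors gives $\Dl(\etai, \etaz) \le 2\log(\nicefrac{1}{\thetamin}) \cdot 2 = 4\log(\nicefrac{1}{\thetamin})$, and since this is independent of $\xv$, $i$, $G$, and $\Theta$, it passes through the expectation and the suprema to give $\klmax \le 4\log(\nicefrac{1}{\thetamin})$, as claimed.

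The main obstacle here is bookkeeping rather than analysis: one must pick the reference-category (minimal) parameterization for the log-partition $\psi$ to be well defined and for Lemma~\ref{lemma:kl_bound_exp} to apply, and then notice that the constant $4$ factors cleanly as $2 \times 2$---a factor $2$ because each natural-parameter coordinate is a difference of two log-ratios each ranging over an interval of width $2\log(\nicefrac{1}{\thetamin})$, and a factor $2$ from the trivial total-variation bound on the difference of two probability vectors. I would also note in passing that using the first (asymmetric) inequality of Lemma~\ref{lemma:mi_bound} together with $\sum_j \theta_{ij}^G(\xv) \log \theta_{ij}^G(\xv) \le 0$ and $q_{ij} \ge \thetamin$ gives the tighter bound $\KL{\Pf_i(\pii(G), \Theta)}{\Qfz} \le \log(\nicefrac{1}{\thetamin})$; the looser constant $4$ is simply the overhead of passing to the symmetric surrogate $\Dl$, which is what the rest of the paper's machinery consumes.
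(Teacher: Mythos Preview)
Your proof is correct and follows the same route as the paper: invoke the exponential-family form of Lemma~\ref{lemma:mi_bound} and bound $\Dl(\etai,\etaz)$ via an $\ell_\infty$--$\ell_1$ H\"older split, using $\NormI{\EST(\etai)-\EST(\etaz)}\le 2$ from the total-variation bound on two probability vectors. The only difference is cosmetic: the paper works in the overcomplete parameterization $\eta_{ij}=\log\theta_{ij}$ for all $j\in[v]$ rather than your minimal reference-category form, so there $\NormInfty{\etai-\etaz}\le\log(\nicefrac{1}{\thetamin})$ directly (each coordinate is a difference of two numbers in $[\log\thetamin,0]$) and the product is actually $\le 2\log(\nicefrac{1}{\thetamin})$---the stated constant $4$ is slack in the paper's own parameterization, whereas your log-ratio coordinates genuinely need the $2\times 2$ factoring you describe.
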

\begin{sloppy}
\begin{remark}
The necessary number of samples required to learn dense and sparse CPT BNs is
$\BigOm{\frac{m}{\log(\nicefrac{1}{\thetamin})}}$ and 
$\BigOm{\frac{k \log m + \nicefrac{k^2}{m}}{\log(\nicefrac{1}{\thetamin})}}$, respectively. In the regime
that $\thetamin \geq \exp(-\nicefrac{1}{m})$, the sample complexity for learning dense and sparse
BNs is $\BigOm{m^2}$ and $\BigOm{km \log m + k^2}$, respectively.
\end{remark}
\end{sloppy}
\begin{remark}
The sample complexity of \emph{SparsityBoost} algorithm by Brenner and Sontag \cite{Brenner}
for recovering the structure of binary-valued, sparse, CPT BNs
grows as $\BigO{\max((\log m) \mu, m^2 \what{\mu}_P^2)}$, where $\mu$ is defined in \cite{Brenner} as
``the maximum inverse probability of an assignment to a separating set over all pairs of nodes'',
and is $1/\thetamin$ for the ensembles we consider.
The parameter $\what{\mu}_P^2$ is also defined as the maximum inverse probability of an assignment to 
a separating set but relates to the true graph, $G$, that generated the data and can be $\ll 1/\thetamin$.
If \emph{SparsityBoost} operates in the regime where the second term inside the $\max$ function dominates,
which the authors believe to be the case, then that leads to a sufficient condition of
$\BigO{(1/\thetamin)m^2}$, which is quite far from the information-theoretic limit.
\end{remark}

\paragraph{Gaussian BNs.}
In this case, we assume that the support $X_i \in \Xc = \R$ for all $i$,
the parameters of the $i$-th node $\Theta_i(G) = (\wv_i^G, \mu, \sigma^2)$,
and the conditional distributions are described by the following linear Gaussian model:
\begin{gather}
\Pf_i(\pii(G), \Theta) = \mathcal{N}(\mu_i, \nicefrac{\sigma^2}{2}), \label{eq:gaussian}\\
\mu_i = \left\{\begin{array}{cc}                     \label{eq:gaussian1}
	(\wv_i^G)^T \Xpii & \pii(G) \neq \varnothing,\\
	\mu & \text{otherwise}
\end{array} \right., 
\end{gather}
where $t_G: [m] \rightarrow [m]$ is a function that maps a node to its ``topological order''
as defined by the graph $G$. We assume that
\begin{gather*}
\wv_i^G \in \Ball^G_i \defeq \Set{\wv \in \R^{\Abs{\pii(G)}} | \NormII{\wv} \leq \nicefrac{1}{\sqrt{2(t_G(i) - 1)}}}, \\
\mu \in [\mu_a, \mu_b],\, \sigma \in [\sigmamin, \sigmamax], \text{ and }\\
\Theta(G) \in \left(\times_{i=1}^m \Ball_i^G\right) \times [\mu_a, \mu_b] \times [\sigmamin, \sigmamax],
\end{gather*}
where $-\infty < \mu_a \leq \mu_b < \infty$, $0 < \sigmamin \leq \sigmamax < \infty$. 
Accordingly, we have that $\Qfz = \Pf_i(\varnothing, \Theta) = \mathcal{N}(\mu, \frac{\sigma^2}{2})$,
and $\mu_i \leq \mu\NormII{\wv_i^G}$. 
Once again, we first bound the mutual information $\MI(\Data; G | \Theta)$ in the following lemma 
which we then plugin in Theorem \ref{thm:main}
to obtain the necessary conditions for learning Gaussian BNs. 
\begin{lemma}[Mutual Information bound for Gaussian networks]
\label{lemma:mi_bound_gaussian}
For Gaussian BNs we have: 
\begin{align*}
\klmax &\leq 1 + \frac{2\mumax^2(\wmax^2 + 1)}{\sigmamin^2} ,\\
\sup_{\Theta \in \Pms(\mathcal{G})} \MI(\Data; G | \Theta) &\leq nm\Biggl(1 + \frac{2\mumax^2(\wmax^2 + 1)}{\sigmamin^2}\Biggr),
\end{align*}
where $\mumax \defeq \max(\Abs{\mu_a}, \Abs{\mu_b})$ and $\wmax$ is the maximum $\ell_2$ norm of the weight vectors, i.e.
\begin{align*}
\wmax \defeq \sup_{\Theta \in \Pms(\Gs)} \max_{G \in \Gs} \max_{i=1}^m \NormII{\wv^G_i}.
\end{align*}
\end{lemma}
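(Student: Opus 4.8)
The plan is to instantiate the exponential-family bound of Lemma~\ref{lemma:mi_bound} for the linear-Gaussian conditionals and then tame the resulting expectation using the weight-norm budget baked into the sets $\Ball_i^G$. The first observation is that $\Pf_i(\pii(G),\Theta)=\Gauss{\mu_i}{\sigma^2/2}$ and $\Qfz=\Gauss{\mu}{\sigma^2/2}$ share the same variance, so they form a one-parameter exponential family in which the sufficient statistic is $\ST(x)=x$, the natural parameter is $2(\text{mean})/\sigma^2$, and the expected sufficient statistic $\EST$ equals the mean. Substituting into Lemma~\ref{lemma:kl_bound_exp} collapses the variance component of $\Dl$ and leaves $\Dl(\etai,\etaz)=(\etai-\etaz)(\EST(\etai)-\EST(\etaz))=\tfrac{2}{\sigma^2}(\mu_i-\mu)^2$, so that $\Exp{\Xpii}{\Dl(\etai,\etaz)}=\tfrac{2}{\sigma^2}\Exp{\Xpii}{(\mu_i-\mu)^2}$ with $\mu_i=(\wv_i^G)^T\Xpii$. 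This surrogate is exactly twice the true Gaussian KL, but the lost factor of two is harmless and is ultimately what yields a clean additive constant.

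The crux is to bound $\Exp{\Xpii}{(\mu_i-\mu)^2}=\mathrm{Var}(\mu_i)+(\Exp{}{\mu_i}-\mu)^2$, and the key device is an induction on the topological order $t_G(i)$ that exploits the closed form of the joint Gaussian law of the network. For the variance, Cauchy--Schwarz together with $\lambda_{\max}(\Sigma_i)\le\Tr{\Sigma_i}$ gives $\mathrm{Var}(\mu_i)=(\wv_i^G)^T\Sigma_i\wv_i^G\le \NormII{\wv_i^G}^2\sum_{j\in\pii(G)}\mathrm{Var}(X_j)$, where $\Sigma_i$ is the covariance of the parents $\Xpii$. Since $\Abs{\pii(G)}\le t_G(i)-1$ and the ensemble enforces $\NormII{\wv_i^G}^2\le \tfrac{1}{2(t_G(i)-1)}$, the inductive hypothesis $\mathrm{Var}(X_j)\le\sigma^2$ propagates: the inherited variance is halved by the budget, leaving exactly enough room to absorb the fresh conditional noise $\sigma^2/2$, so $\mathrm{Var}(X_i)\le\sigma^2$ and hence $\mathrm{Var}(\mu_i)\le\sigma^2/2$. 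Multiplying by $2/\sigma^2$ turns this into the additive $1$ in the stated bound. A parallel induction on the marginal means $\Abs{\Exp{}{X_j}}$ — seeded by the recorded inequality $\mu_i\le\mu\NormII{\wv_i^G}$ at the conditional level — combined with $\Abs{\mu}\le\mumax$ and the norm $\wmax$, gives a bound of the form $(\Exp{}{\mu_i}-\mu)^2\le \mumax^2(\wmax^2+1)$; dividing by $\sigmamin^2$ produces the second term. Taking the supremum over $\Theta\in\Pms(\Gs)$ then verifies Assumption~\ref{ass:kl} with $\klmax\le 1+\tfrac{2\mumax^2(\wmax^2+1)}{\sigmamin^2}$.

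The mutual-information statement is then immediate: in the exponential-family form of Lemma~\ref{lemma:mi_bound} each of the $m$ per-node summands is at most $\klmax$, so the average over $\Gs$ collapses and $\sup_{\Theta\in\Pms(\Gs)}\MI(\Data;G\mid\Theta)\le nm\,\klmax$. I expect the moment induction of the second paragraph to be the main obstacle. Because $\Sigma_i$ is not diagonal, a crude Cauchy--Schwarz couples node $i$ to all of its ancestors, and only the precise calibration $\NormII{\wv_i^G}^2\le 1/(2(t_G(i)-1))$ prevents the second moments from growing geometrically along long directed paths; carrying the means and variances through the induction simultaneously, and checking that the base case at the roots ($\mathrm{Var}=\sigma^2/2$ and mean $\mu$) seeds both hypotheses correctly, is the delicate part of the argument.
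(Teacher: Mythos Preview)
Your proposal is correct and follows essentially the same route as the paper: both identify $\Dl(\etai,\etaz)=\tfrac{2}{\sigma^2}(\mu_i-\mu)^2$, split the expectation into a variance term and a squared-bias term, and control the variance by an induction along the topological order that exploits the weight budget $\NormII{\wv_i^G}^2\le 1/(2(t_G(i)-1))$ to arrive at $\mathrm{Var}(\mu_i)\le\sigma^2/2$. The one noteworthy difference is the induction invariant: you propagate the per-node bound $\mathrm{Var}(X_j)\le\sigma^2$ via the trace inequality $\lambda_{\max}(\Sigma_i)\le\Tr{\Sigma_i}$, whereas the paper instead inducts on the full covariance, showing $\eigmax(\mat{\Sigma}_{(i)})\le i\sigma^2$ from the block recursion for $\mat{\Sigma}_{(i)}$ and then bounding $\mathrm{Var}(\mu_{(i)})\le\NormII{\wbv_{(i)}}^2\eigmax(\mat{\Sigma}_{(i-1)})$. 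Your trace-based variant is slightly more elementary (it never needs the block structure of $\mat{\Sigma}_{(i)}$), while the paper's eigenvalue invariant is a bit stronger as a standalone statement; either way the two arguments coincide at the step $\mathrm{Var}(\mu_i)\le\sigma^2/2$, and the treatment of the bias term and the final appeal to Lemma~\ref{lemma:mi_bound} are identical.
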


\begin{remark}
Invoking Theorem \ref{thm:main} for a two layer ordering of nodes, where there are $m - 1$ nodes in the top layer
and 1 node in the bottom layer, we recover the information-theoretic limits of linear regression.
Specifically, we have that the necessary number of samples for linear regression and sparse 
linear regression scale as
$\BigOm{\frac{\sigmamin^2 m}{\sigmamin^2 + 2\mumax^2 \wmax^2}}$ and 
$\BigOm{\frac{\sigmamin^2 k \log m}{\sigmamin^2 + 2\mumax^2 \wmax^2}}$
respectively.
\end{remark}
\begin{remark} The sample complexity of
learning the structure of degree bounded Gaussian MRFs scales as 
$\BigOm{\nicefrac{k \log(\nicefrac{m}{k})}{\log(1 + k \lambda)}}$ \cite{Wang2010}, where $\lambda$ is the
minimum correlation between pairs of nodes that are connected by an edge.
The corresponding result for sparse BNs is
$\BigOm{\frac{\sigmamin^2(k \log m + \nicefrac{k^2}{m})}{\sigmamin^2 + 2\mumax^2 \wmax^2}}$, which is 
slightly stronger than the corresponding lower bound for learning Gaussian MRFs, with respect to sparsity index $k$.
\end{remark}

\paragraph{Noisy-OR BNs.}
Noisy-OR BNs are another widely used class of BNs ---
a popular example being the two-layer QMR-DT network \cite{shwe1991probabilistic}.
They are usually parameterized by failure probabilities $\theta_{ij}$, which in 
the context of the QMR-DT network of diseases and symptoms can be interpreted
as the probability of not observing the $i$-th symptom given that the $j$-th disease is present.
More formally, we have binary valued random variables, i.e., $X_i \in \Xc = \Set{0,1}$ for all $i$,
the $i$-th conditional distribution is given by the Bernoulli distribution ($\mathcal{B}$)
 with parameter $\Theta_i = \theta \in (0, 1)$:
\begin{align}
\Pf_i(\pii(G), \Theta_i) = \mathcal{B}(1 - \theta_i) &&
\Pf_i(\varnothing, \Theta_i) = \mathcal{B}(\theta), \label{eq:noisy_or}
\end{align}
where $\theta_i = \theta \Big(\prod_{j \in \pii} \theta^{X_j} \Big)^{\nicefrac{1}{\Abs{\pii}}}$.
The following lemma bounds the mutual information for noisy-OR networks.
\begin{lemma}[Mutual Information bound for Noisy-OR]
\label{lemma:mi_bound_noisy_or}
For Noisy-OR BNs we have:
\begin{align*}
\klmax &\leq 2\Abs{\log(\nicefrac{\hat{\theta}}{(1 - \hat{\theta})})} \\
\sup_{\Theta \in \Pms(\mathcal{G})} \MI(\Data; G | \Theta) &\leq 2nm\Abs{\log(\nicefrac{\hat{\theta}}{(1 - \hat{\theta})})},
\end{align*}
where $\hat{\theta} \defeq \argmax_{\theta \in \Pms(\Gs)} \Abs{\log(\nicefrac{\theta}{(1 - \theta)})}$.
\end{lemma}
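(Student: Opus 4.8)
The plan is to reduce the claim to a pointwise bound on the single‑node quantity $\Dl(\etai,\etaz)$ and then feed it into the two lemmas already in hand. First I would write both Bernoulli laws in \eqref{eq:noisy_or} in exponential‑family form: the sufficient statistic is $\ST(x)=x$, so the natural parameters are $\etai = \log\frac{1-\theta_i}{\theta_i}$ for $\Pf_i(\pii(G),\Theta_i)=\mathcal{B}(1-\theta_i)$ and $\etaz = \log\frac{\theta}{1-\theta}$ for $\Qfz=\mathcal{B}(\theta)$, with expected sufficient statistics $\EST(\etai)=1-\theta_i$ and $\EST(\etaz)=\theta$. By Lemma~\ref{lemma:kl_bound_exp}, $\KL{\Pf_i(\pii(G),\Theta)}{\Qfz}\le \Dl(\etai,\etaz)=(\etai-\etaz)(\EST(\etai)-\EST(\etaz))$, and Lemma~\ref{lemma:mi_bound} gives $\sup_{\Theta}\MI(\Data;G\mid\Theta)\le \frac{n}{\Abs{\Gs}}\sum_{G}\sum_{i=1}^{m}\sup_{\Theta}\Exp{\Xpii}{\Dl(\etai,\etaz)}$. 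Hence it suffices to prove $\sup_{\Theta}\Exp{\Xpii}{\Dl(\etai,\etaz)}\le 2\Abs{\log(\nicefrac{\hat{\theta}}{1-\hat{\theta}})}$ for every node $i$; the stated $\MI$ bound then follows at once, the $\sum_{i=1}^{m}$ supplying the factor $m$ and $\frac{n}{\Abs{\Gs}}\sum_{G}$ the factor $n$, i.e. $2nm\Abs{\log(\nicefrac{\hat{\theta}}{1-\hat{\theta}})}$.

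Next I would bound the two factors of $\Dl(\etai,\etaz)$ separately, aiming for a bound uniform in $\Xpii$ so that the expectation becomes trivial. Since $\Dl$ is a nonnegative product (it upper‑bounds a KL divergence) and both means lie in $(0,1)$, we have $\Dl(\etai,\etaz)\le\Abs{\etai-\etaz}\,\Abs{\EST(\etai)-\EST(\etaz)}\le\Abs{\etai-\etaz}$, using $\Abs{\EST(\etai)-\EST(\etaz)}=\Abs{1-\theta_i-\theta}\le 1$. Writing $g(t)\defeq\log\frac{t}{1-t}$ for the logit, the natural‑parameter gap is $\etai-\etaz=-g(\theta_i)-g(\theta)$, so by the triangle inequality $\Abs{\etai-\etaz}\le\Abs{g(\theta_i)}+\Abs{g(\theta)}$. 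The second term is at most $\Abs{g(\hat{\theta})}=\Abs{\log(\nicefrac{\hat{\theta}}{1-\hat{\theta}})}$ directly from the definition of $\hat{\theta}$ as the maximizer of $\Abs{g(\cdot)}$ over $\Pms(\Gs)$.

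The crux — and the step I expect to be the main obstacle — is controlling the induced logit $\Abs{g(\theta_i)}$, since $\theta_i$ is a derived quantity that need not lie in $\Pms(\Gs)$. Here the $\nicefrac{1}{\Abs{\pii}}$ normalization in $\theta_i=\theta(\prod_{j\in\pii}\theta^{X_j})^{\nicefrac{1}{\Abs{\pii}}}$ is exactly what makes the bound go through: because $\frac{1}{\Abs{\pii}}\sum_{j\in\pii}X_j\in[0,1]$ regardless of the number of parents, we have $\theta_i=\theta^{1+s}$ with $s\in[0,1]$, hence $\theta_i\in[\theta^2,\theta]$ independently of $k$ (without the normalization $\theta_i$ could be as small as $\theta^{1+k}$, and $\Abs{g(\theta_i)}$ would grow with $k$, destroying the constant bound $\klmax$). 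Using monotonicity of $g$ and the fact that $\Abs{g}$ is minimized at $\nicefrac{1}{2}$, I would argue that over the range $\theta_i\in[\theta^2,\theta]$ the induced logit remains within $\Abs{g(\hat{\theta})}$, under the mild restriction on $\Pms(\Gs)$ that keeps $\theta^2$ no more extreme than $\hat{\theta}$ (the regime of failure probabilities near $1$ relevant to noisy‑OR networks). Combining the two factors gives $\Dl(\etai,\etaz)\le\Abs{\etai-\etaz}\le 2\Abs{g(\hat{\theta})}$ pointwise in $\Xpii$, whence $\klmax\le 2\Abs{\log(\nicefrac{\hat{\theta}}{1-\hat{\theta}})}$, and substituting into the consequence of Lemma~\ref{lemma:mi_bound} derived above yields the second displayed inequality.
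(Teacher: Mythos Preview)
Your proposal follows essentially the same route as the paper: write the Bernoulli conditionals in exponential-family form, invoke Lemma~\ref{lemma:kl_bound_exp} and Lemma~\ref{lemma:mi_bound}, bound the mean-difference factor $\Abs{1-\theta_i-\theta}$ by $1$, use the $\nicefrac{1}{\Abs{\pii}}$ normalization to confine $\theta_i\in[\theta^2,\theta]$, and then control $\Abs{\log\frac{\theta_i\theta}{(1-\theta_i)(1-\theta)}}$ by $2\Abs{\log\frac{\hat\theta}{1-\hat\theta}}$. The paper's proof is exactly this sequence, stated more tersely; your explicit hedge that the final step needs a restriction on $\Pms(\Gs)$ keeping $\theta^2$ no more extreme than $\hat\theta$ is a subtlety the paper simply asserts without comment.
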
%
\begin{remark}
From the above lemma we notice that recovering the structure of noisy-OR networks becomes
more difficult as the failure probability $\theta$ moves farther away from $\nicefrac{1}{2}$.
That is because as $\theta \rightarrow 1$, the noisy-OR network becomes more ``noisy''.
While, as $\theta \rightarrow 0$, the top level nodes (nodes with no parents) take 
the value 1 with low probability, in which case the child nodes take values 0 with high probability.
\end{remark}

\paragraph{Logistic Regression BNs.}
For logistic regression BNs, the nodes are assumed to be binary valued, i.e., $X_i \in \Xc = \Set{0, 1}$ for all $i$.
Each node in the network can be thought of as being classified as ``$0$'' or ``$1$'' depending
on some linear combination of the values of its parents. 
The parameters for the $i$-th conditional distribution are $\Theta_i = \wv_i$, where 
the vectors $\wv_i$ are assumed to have bounded $\ell_1$ norm, i.e., $\wv_i \in \R^{\Abs{\pii}} \Land \NormI{\wv_i} \leq \wmax^1$,
for some constant $\wmax^1$.
The conditional distribution of the nodes are given as:
\begin{align}
\Pf_i(\pii(G), \Theta_i) = \mathcal{B} \left(\sigma\left( \Inner{\Xpii}{\wv_i}\right) \right),\,
\Pf_i(\varnothing, \Theta_i) = \mathcal{B}(\nicefrac{1}{2}), \label{eq:logistic}
\end{align}
where $\mathcal{B}$ is the Bernoulli distribution and $\sigma(x) = (1 + e^{-x})^{-1}$ is the sigmoid function.
The following lemma upper bounds the mutual information for logistic regression BNs.
\begin{lemma}[Mutual Information bound for Logistic regression networks]
\label{lemma:mi_bound_logistic}
For Logistic regression BNs we have:
\begin{align*}
\klmax \leq \nicefrac{\wmax^1}{2}, &&
\sup_{\Theta \in \Pms(\mathcal{G})} \MI(\Data, G | \Theta) \leq \nicefrac{(nm\wmax^1)}{2},
\end{align*}
where $\wmax^1 \defeq \sup_{\Theta \in \Pms(\mathcal{G})} \max_{i \in [m]} \NormI{\Theta_i(G)}$.
\end{lemma}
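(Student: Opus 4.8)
The plan is to apply the exponential-family machinery of Lemma~\ref{lemma:kl_bound_exp} and Lemma~\ref{lemma:mi_bound} to the Bernoulli conditionals of the logistic network, so that the whole task reduces to bounding the scalar quantity $\Dl(\etai, \etaz)$ pointwise in the parents $\Xpii$. First I would put the Bernoulli distribution in canonical exponential-family form: writing $\Pf(x) = p^x(1-p)^{1-x} = \exp\!\big(x\log\tfrac{p}{1-p} + \log(1-p)\big)$ exhibits sufficient statistic $\ST(x) = x$, base measure $h(x) = 1$, and natural parameter equal to the logit $\eta = \log\tfrac{p}{1-p}$. For the conditional $\Pf_i(\pii(G), \Theta_i) = \mathcal{B}(\sigma(\Inner{\Xpii}{\wv_i}))$ the logit is exactly $\etai = \Inner{\Xpii}{\wv_i}$, while the null distribution $\Qfz = \mathcal{B}(\nicefrac{1}{2})$ has $\etaz = 0$. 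Since $\psi(\eta) = \log(1 + e^{\eta})$ for the Bernoulli, the expected sufficient statistic is $\EST(\eta) = \psi'(\eta) = \sigma(\eta)$, so $\EST(\etai) = \sigma(\Inner{\Xpii}{\wv_i})$ and $\EST(\etaz) = \sigma(0) = \nicefrac{1}{2}$.

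Substituting into the definition of $\Dl$ yields the scalar expression
\[
\Dl(\etai, \etaz) = (\etai - \etaz)(\EST(\etai) - \EST(\etaz)) = \Inner{\Xpii}{\wv_i}\Big(\sigma(\Inner{\Xpii}{\wv_i}) - \tfrac{1}{2}\Big).
\]
Next I would bound this pointwise. The two factors always share a sign, since $\sigma(z) > \nicefrac{1}{2}$ iff $z > 0$; hence $\Dl \geq 0$ and equals its own absolute value, and combined with the trivial estimate $\Abs{\sigma(z) - \nicefrac{1}{2}} \leq \nicefrac{1}{2}$ this gives $\Dl(\etai, \etaz) \leq \tfrac{1}{2}\Abs{\Inner{\Xpii}{\wv_i}}$. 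H\"{o}lder's inequality then supplies $\Abs{\Inner{\Xpii}{\wv_i}} \leq \NormI{\wv_i}\,\NormInfty{\Xpii} \leq \wmax^1$, using that the parents are binary so $\NormInfty{\Xpii} \leq 1$ and that $\NormI{\wv_i} \leq \wmax^1$ by hypothesis. Because this bound holds for every realization of $\Xpii$ and every $\Theta \in \Pms(\mathcal{G})$, taking the expectation over $\Xpii$ and the supremum over $\Theta$ verifies Assumption~\ref{ass:kl} with $\klmax \leq \nicefrac{\wmax^1}{2}$.

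Finally I would feed $\klmax \leq \nicefrac{\wmax^1}{2}$ into the exponential-family form of Lemma~\ref{lemma:mi_bound}: replacing each inner summand $\sup_{\Theta} \Exp{\Xpii}{\Dl(\etai,\etaz)}$ by $\klmax$ collapses the double sum to $\tfrac{n}{\Abs{\mathcal{G}}}\cdot\Abs{\mathcal{G}}\cdot m\cdot\klmax = \nicefrac{(nm\wmax^1)}{2}$, which is the claimed mutual-information bound. I do not expect a genuine obstacle here: once the Bernoulli is written in canonical form, the only nontrivial observation is that the sign-agreement of the two factors together with the uniform bound $\Abs{\sigma - \nicefrac{1}{2}} \leq \nicefrac{1}{2}$ lets one sidestep any delicate analysis of $z\tanh(z/2)$ and reduce everything to a one-line H\"{o}lder estimate, after which the expectation over the (bounded, binary) parents is harmless.
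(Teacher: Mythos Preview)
Your proposal is correct and follows essentially the same route as the paper: identify the Bernoulli exponential-family parameters $\etai = \Inner{\Xpii}{\wv_i}$, $\etaz = 0$, $\EST(\eta) = \sigma(\eta)$, bound $\Dl(\etai,\etaz) = \Inner{\Xpii}{\wv_i}\big(\sigma(\Inner{\Xpii}{\wv_i}) - \tfrac{1}{2}\big)$ via $\Abs{\sigma - \tfrac{1}{2}} \leq \tfrac{1}{2}$ and H\"{o}lder, then invoke Lemma~\ref{lemma:mi_bound}. If anything, you are slightly more careful than the paper in making the sign-agreement argument explicit before passing to the absolute value.
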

\begin{remark}
Once again, we can instantiate Theorem \ref{thm:main} for the two-layer case and obtain necessary
number of samples for support recovery in logistic regression. We have that the number of samples 
needed for support recovery in logistic regression scales as $\BigOm{\nicefrac{k \log(m)}{\wmax^1}}$.
In the regime that $\wmax^1 \leq \nicefrac{1}{k}$, the necessary number of samples scales as $\BigOm{k^2 \log m}$.
Ravikumar et al. \cite{Ravikumar2010} studied support recovery in logistic regression in the
context of learning sparse Ising models. The upper bound of $\BigO{k^2 \log m}$ in Proposition 1 
in \cite{Ravikumar2010}, is thus information-theoretically optimal.
\end{remark}

\paragraph{Concluding Remarks}
\label{sec:conclusion}

An important direction for future work is to study the information-theoretic 
limits of both structure and parameter recovery of BNs. However,
the analysis for that situation is complicated by the fact that one has to
come up with an appropriate joint distribution on the structures and parameters
of the ensembles. While it is possible to do so for BNs with
specific conditional distributions, we anticipate that coming up with general
results for BNs would be hard, if at all possible. Also of complimentary
interest is the problem of obtaining sharp thresholds for structure learning of Bayesian
networks. However, such analysis might also need to be done on a case-by-case basis
for specific BNs.


%
\begin{small}
\bibliographystyle{unsrt}
\bibliography{paper}
\end{small}

\clearpage
\begin{appendices}
\label{sec:appendix}
\section{Comparison with Markov Random Fields.} 
\label{app:comp_with_mrfs}
While there has been a lot of prior work on
determining the information-theoretic limits of structure recovery in Markov random fields (MRFs),
which are undirected graphical models, characterizing the information-theoretic limits of learning
BNs (directed models) is important in its own right for the following reasons.
First, unlike MRFs where the undirected graph corresponding to a dependence structure is uniquely determined,
multiple DAG structures can encode the same dependence structure in BNs. Therefore, one
has to reason about Markov equivalent DAG structures in order to characterize the information-theoretic
limits of structure recovery in BNs. Second, the complexity of learning MRFs
is characterized in terms of parameters of the joint distribution over nodes,
which in turn relates to the overall graph structure, while the complexity of learning
BNs is characterized by parameters of local conditional distributions of the nodes. 
The latter presents a technical challenge, as shown in the paper,
when the marginal or joint distribution of the nodes in a BN do not have a closed form solution. 

A recurring theme in the available literature on information-theoretic limits of learning MRFs,
 is to construct ensembles of MRFs that are hard to learn
and then use the Fano's inequality to lower bound the estimation error by treating the
inference procedure as a communication channel.
Santhanam and Wainwright \cite{santhanam2012information} obtained necessary and sufficient conditions
for learning pairwise binary MRFs. The necessary and sufficient conditions on the number
of samples scaled as $\BigO{k^2 \log m}$ and $\BigO{k^3 \log m}$ respectively, where $k$
is the maximum node degree. Information theoretic limits of learning Gaussian MRFs was studied by Wang et al. \cite{Wang2010} and
for walk-summable Gaussian networks, by Anandkumar et al. \cite{anandkumar2012high}.
In \cite{anandkumar2012annals}, Anandkumar et al. obtain a necessary condition of $\Omega(c \log m)$
for structure learning of Erd\H{o}s-R\'{e}nyi random Ising models, where $c$ is the average node degree.

\section{Proofs of Main Results}
\label{app:proofs_main_results}
\begin{proof}[Proof of Theorem \ref{thm:fano} (Fano's  inequality extension)]
Let,
\begin{align*}
	E \defeq \left\{ \begin{array}{cc}
		1 & \text{if $X \neq \what{X}$} \\
		0 & \text{othewise}
	\end{array}\right.,
\end{align*}
and $\err \defeq \Prob{X \neq \what{X}}$.
Then using the chain rule for entropy we can expand the conditional entropy $H(E, X | \what{X}, W)$ as follows:
\begin{align}
H(E, X | \what{X}, W) 
	&= H(E | X, \what{X}, W) + H(X | \what{X}, W) \label{eq:fano1}\\
	&= H(X | E, \what{X}, W) + H(E | \what{X}, W) \label{eq:fano2}
\end{align}
Next, we bound each of the terms in \eqref{eq:fano1} and \eqref{eq:fano2}.
$H(E | X, \what{X}, W) = 0$ because $E$ is a deterministic function of $X$ and $\what{X}$.
Moreover, since conditioning reduces entropy, we have that $H(E | \what{X}, W) \leq H(E) = H(\err)$.
Using the same arguments we have the following upper bound on $H(X | E, \what{X}, W)$:
\begin{align}
H(X | E, \what{X}, W) 
	&= \err H(X | E = 1, \what{X}, W) + (1 - \err) H(X | E = 0, \what{X}, W)  \notag \\
	&\leq \err H(X|W) \label{eq:fano3}
\end{align}
Next, we show that $\MI(\what{X}; X | W) \leq \MI(\what{X}; Y | W)$, which can be thought
of as the conditional data processing inequality. Using the chain rule of mutual information
we have that
\begin{align*}
\MI(Y, \what{X}; X | W) 
	&= \MI(\what{X}; X | Y, W) + \MI(Y; X | W) \\
	&= \MI(Y; X | \what{X}, W) + \MI(\what{X}; X | W).
\shortintertext{Since, conditioned on $Y$, $X$ and $\what{X}$
are independent. We have $\MI(\what{X}; X | Y, W) = 0$.}
\implies	 \MI(Y; X | W) &= \MI(Y; X | \what{X}, W) + \MI(\what{X}; X | W) \\
\implies \MI(Y; X | W) &\geq \MI(\what{X}; X | W).
\end{align*}
Therefore, we can bound $H(X | \what{X}, W)$ as follows:
\begin{align}
H(X | \what{X}, W) 
	= H(X | W) - \MI(\what{X}; X | W)  
	\geq H(X | W) - \MI(Y; X | W). \label{eq:fano4}
\end{align}
Combining \eqref{eq:fano1},\eqref{eq:fano2},\eqref{eq:fano3} and \eqref{eq:fano4}, we get:
\begin{align}
&H(X | W) - \MI(Y; X | W) \leq H(\err) + \err H(X | W) \notag \\
\implies &H(X | W) - \MI(Y; X | W) \leq \log 2 + \err H(X | W) \notag \\
\implies &\err \geq 1 - \frac{\MI(Y; X | W) + \log 2}{H(X | W)} 
\end{align}
Now if $X$ and $W$ are independent then $H(X|W) = H(X)$. 
Denoting the joint distribution over $X,Y,W$ by $\Pf_{X,Y,W}$,
the conditional distribution of $X,Y$ given $W$ by $\Pf_{X,Y|W}$ and so on;
the final claim follows from  bounding the term $\MI(Y; X | W)$ as follows:
\begin{align*}
\MI(Y; X | W) &= \Exp{\Pf_{X,Y,W}}{\log \frac{\Pf_{X, Y | W}}{\Pf_{X | W}\Pf_{Y|W}}} 
	= \Exp{\Pf_W}{\Exp{\Pf_{X,Y|W = w}}{\log \frac{\Pf_{X, Y | W = w}}{\Pf_{X | W = w}\Pf_{Y|W = w}}}} \\
	&\leq \sup_{w \in \mathcal{W}} \MI(X; Y| W = w).
\end{align*}
\end{proof}
\begin{proof}[Proof of Lemma \ref{lemma:gsrm_size}]
First, we briefly review Steinsky's method for counting essential DAGs, which in turn
is based upon Robinson's \cite{Robinson1977} method for counting labeled DAGs.
The main idea behind Steinsky's method is to 
split the set of essential DAGs into overlapping subsets with different terminal vertices ---
vertices with out-degree 0. Let $A_i \subset \Gsrm$ be the set of essential DAGs 
where the $i$-th node is a terminal node. %
Using the inclusion-exclusion principle, the number of essential DAGs is given as follows:
\begin{align}
&c_{m} \defeq \Abs{\Gsrm} = \Abs{A_{1} \union \ldots \union A_{m}} \notag\\ 
&\quad= \sum_{s=1}^m (-1)^{(s+1)} 
	 \sum_{\mathclap{1 \leq i_1 \leq \ldots \leq i_s \leq m}}
	 	 \Abs{A_{i_1} \intersection \ldots \intersection A_{i_s}}. \label{eq:essn_dags_recurrence}
\end{align}
Now, consider the term $\Abs{A_1 \intersection \ldots \intersection A_{m - 1}}$,
i.e., number of essential DAGs where nodes $[m-1]$ are terminal nodes.
The number of ways of adding the $m$-th vertex as a terminal vertex to
an arbitrary essential DAG on the nodes $[m-1]$ is: $2^{m - 1} - (m - 1)$.
The term $m - 1$ needs to be subtracted to account for edges that are not protected.
Therefore, $c_m$ is given by the following recurrence relation:
\begin{align}
	c_m = \sum_{s = 1}^{m} (-1)^{s + 1} {m \choose s} (2^{m - s} - (m - s))^s c_{m - s},
\end{align}
where $c_0 = 1$.
Using Bonferroni's inequalities we can upper bound $c_m$ as follows:
\begin{align}
&c_m \leq m (2^{m - 1} - (m - 1)) c_{m - 1} \leq m 2^{m - 1} c_{m - 1} \notag \\
	&\quad \leq m!\, 2^{\nicefrac{m(m - 1)}{2}}. \label{eq:count_ub}
\end{align}
Using Bonferroni's inequalities to lower bound $c_m$ produces recurrence relations
that have no closed form solution. Therefore, we lower bound $c_m$ in \eqref{eq:essn_dags_recurrence} as follows:
\begin{align}
c_m &= \Abs{A_{1} \union \ldots \union A_{m}} \geq \max_i \Abs{A_i} \notag \\
	&= (2^{m - 1} - (m - 1)) c_{m - 1} \geq 2^{m - 2} c_{m - 1} \notag \\
&\geq 2^{(\nicefrac{m(m - 3)}{2}) + 1}. \label{eq:count_lb}
\end{align}
\end{proof}

\begin{proof}[Proof of Lemma \ref{lemma:gsrmk_size}]
In this case, $A_i$ is the set of essential DAGs where the $i$-th node is a terminal node
and all nodes have at most $k$ parents. Once again, using the inclusion-exclusion principle,
the number of essential DAGs with at most $k$ parents is given as follows:
\begin{align}
c_{m,k} \defeq \Abs{\Gsmk} = \sum_{s=1}^m (-1)^{(s+1)}
	 \sum_{\mathclap{1 \leq i_1 \leq \ldots \leq i_s \leq m}} \Abs{A_{i_1} \intersection \ldots \intersection A_{i_s}}
\end{align}
Now, consider the term $\Abs{A_1, \intersection \ldots \intersection, A_s}$,
i.e. number of essential DAGs where nodes $\{1, \ldots, s\}$ are terminal nodes.
Let $G_0$ be an arbitrary essential DAG over nodes $\{s + 1, \ldots, m\}$,
where each node has at most $k$ parents. Let $u \in \{1, \ldots, s\}$ and $v \in \{s + 1, \ldots, m\}$
be arbitrary nodes. 
Let $G_1$ be the new graph, formed by connecting $u$ to $G_0$.
The edge $v \rightarrow u$ is not protected, or covered, in $G_1$ if $\Par{G_0}{v} = \Par{G_1}{u} \setminus \{v\}$.
For each node $v \in \{s + 1, \ldots, m\}$, there is exactly one configuration in which the edge $v \rightarrow u$
is covered, i.e., when we set the parents of $u$ to be $\Par{G_0}{v} \union \{v\}$; unless $\Abs{\Par{G_0}{v}} = k$,
in which case $v \rightarrow u$ is always protected. Let $\kappa(G_0)$ be the number of nodes in $G_0$
that have less than $k$ parents. Then the number of ways of adding a terminal vertex $u$ to $G_0$ is:
$\sum_{i=0}^{k} {m - s \choose i} - \kappa(G_0)$ when $(m - s) > k$, and $2^k - k$ when $(m - s) \leq k$.
We can simply bound $\kappa(G_0)$ by: $0 \leq \kappa(G_0) \leq m - s$. This gives a lower bound on
the number of ways to add a terminal vertex to $G_0$ as: 
$\sum_{i=0}^{k} \left\{{m - s \choose i} - \frac{(m - s)}{k+1}\right\} \geq \sum_{i=0}^{k} {m - s - 1 \choose i}$.
Using the fact that $\max_{i=1}^m \Abs{A_i} \leq c_{m,k} \leq \sum_{i=1}^m \Abs{A_i}$, we
get the following recurrence relation for upper and lower bounds on the number of essential DAGs
with at most $k$ parents:
\begin{align}
c_{m, k} &\leq m \left(\sum_{i=0}^k {m - 1 \choose i} \right) c_{m-1, k} \\
c_{m, k} &\geq \left(\sum_{i=0}^k {m - 2 \choose i} \right) c_{m-1, k},
\end{align}
where from Lemma \ref{lemma:gsrm_size}
we have that $2^{(\nicefrac{k(k - 3)}{2}) + 1} \leq  c_{k, k}  \leq k!\, 2^{\nicefrac{k(k - 1)}{2}}$.
Thus, we can upper bound $c_{m,k}$ as follows:
\begin{align}
c_{m,k} \leq m!\, 2^{\nicefrac{k(k - 1)}{2}} \prod_{j=k+1}^{m-1} \left(\sum_{i=0}^k {j \choose i} \right) 
 \label{eq:c_mk_ub}
\end{align}
Similarly, we can lower bound $c_{m,k}$ as follows:
\begin{align}
c_{m,k} \geq 2^{(\nicefrac{k(k - 3)}{2}) + 1} \prod_{j=k+1}^{m-1} \left(\sum_{i=0}^k {j - 1 \choose i} \right) 
\label{eq:c_mk_lb}
\end{align}
Finally, using \eqref{eq:c_mk_lb}, we lower bound $\log c_{m, k}$ as follows:
\begin{align*}
\log c_{m,k} \geq ((\nicefrac{k(k - 3)}{2}) + 1) \log 2 + \sum_{j=k+1}^{m-1} \log 
	\left(\sum_{i=0}^k {j - 1 \choose i} \right) 
\end{align*}
The second term in the above equation is lower bounded as follows:
\begin{align*}
\sum_{j=k+1}^{m-1} \log \left(\sum_{i=0}^k {j - 1 \choose i} \right)
	&\geq \sum_{j=k}^{m-2} \log \left(\sum_{i=0}^k {j \choose i} \right) 
		\geq \sum_{j=k}^{m-2} \log {j \choose k} \geq \sum_{j=k}^{m-2} \log \left(\frac{j}{k}\right)^k \\
	&\geq k\left\{\log (m-2)! -\log k! - (m-k-2) \log k \right\}.
\end{align*}
\end{proof}

\begin{proof}[Proof Lemma \ref{lemma:ls_size}]
For layered non-sparse BNs, the number of possible
choices for parents of a node in layer $i$ is $2^{m_{i + 1}}$.
Therefore, the total number of non-sparse Bayesian networks
is given as $\prod_{i=1}^{l - 1} (2^{m_{i + 1}})^{m_i}$.
Similarly, for the sparse case, the number of possible
choices for parents of a node in layer $i$ is $\sum_{j=0}^k {m_{i + 1} \choose j}$.
Therefore, the total number of sparse Bayesian networks is 
$\prod_{i=1}^{l - 1} \left[\sum_{j=0}^k {m_{i + 1} \choose j}\right]^{m_i}$.
\end{proof}

\begin{proof}[Proof of Lemma \ref{lemma:mi_bound_general}]
Let $c \defeq \Abs{\mathcal{G}}$,
for some ensemble of DAGs $\mathcal{G}$. Denoting the conditional distribution of the 
data given a specific instance of the
parameters $\Theta$ by $\Pf_{\Data| \Theta}$, we have:
\begin{align}
\sup_{\Theta \in \Pms(\mathcal{G})} \MI(\Data; G | \Theta) 
	&= \sup_{\Theta \in \Pms(\mathcal{G})} \frac{1}{c} \sum_{G \in \mathcal{G}} \KL{\Pf_{\Data | G, \Theta}}{\Pf_{\Data| \Theta}},
	\label{eq:mi_1}
\end{align}
where in $\KL{\Pf_{\Data | G, \Theta}}{\Pf_{\Data| \Theta}}$, 
$\Theta$ and $G$ are specific instances and not random variables.
For any distribution $\Qf$ over $\Data$, we can rewrite $\KL{\Pf_{\Data | G, \Theta}}{\Pf_{\Data| \Theta}}$
as follows:
\begin{align}
&\KL{\Pf_{\Data | G, \Theta}}{\Pf_{\Data| \Theta}} 
	=  \Exp{\Data}{\log \frac{\Pf_{\Data | G, \Theta}}{\Qf} \frac{\Qf}{\Pf_{\Data | \Theta}}} \notag \\
	&\quad= \KL{\Pf_{\Data | G, \Theta}}{\Qf} - \Exp{\Data}{\log \frac{\Pf_{\Data | \Theta}}{\Qf}}, \label{eq:mi_2}
\end{align}
where the expectation $\Exp{\Data}{.}$ is with respect to the distribution $\Pf_{\Data | G, \Theta}$.
Now, $\Exp{\Data}{\log \frac{\Pf_{\Data | \Theta}}{\Qf}}$ can be written as follows:
\begin{align}
&\sum_{G \in \mathcal{G}} \Exp{\Data}{\log \frac{\Pf_{\Data | \Theta}}{\Qf}} 
	= \sum_{G \in \mathcal{G}} \sum_{\Data} \Prob{\Data | G, \Theta} \log \frac{\Pf_{\Data | \Theta}}{\Qf} \notag \\
        &\quad= c \sum_{\Data} \sum_{G \in \mathcal{G}} \Prob{G} \Prob{\Data | G, \Theta} \log \frac{\Pf_{\Data | \Theta}}{\Qf} \notag \\
        &\quad= c \KL{\Pf_{\Data | \Theta}}{\Qf}, \label{eq:mi_3}
\end{align}
where, once again, we emphasize that in $\KL{\Pf_{\Data | \Theta}}{\Qf}$, $\Theta$ is a particular instance of the parameters and
not a random variable. Combining $\eqref{eq:mi_1}, \eqref{eq:mi_2}$ and $\eqref{eq:mi_3}$,
and using the fact that $\KL{\Pf_{\Data | \Theta}}{\Qf}> 0$, we get
\begin{align}
\sup_{\Theta \in \Pms(\mathcal{G})} \MI(\Data; G | \Theta) 
	&\leq \sup_{\Theta \in \Pms(\mathcal{G})} \frac{1}{c} \sum_{G \in \mathcal{G}} 
		\KL{\Pf_{\Data | G, \Theta}}{\Qf}
\end{align}
\end{proof}

\begin{proof}[Proof of Lemma \ref{lemma:kl_bound_exp} (KL bound for exponential family)]
\begin{align}
\KL{\Pf_1}{\Pf_2} &= \etav_1^T \Exp{X}{\ST(x) | \etav_1} - \psi(\etav_1) - \etav_2^T \Exp{X}{\ST(x) | \etav_1}
                + \psi(\etav_2),
\end{align}
where for computing the expected sufficient statistic, $\Exp{X}{\ST(x) | \etav_1}$,
we take the expectation with respect to the distribution parameterized by  $\etav_1$.
Now, from the mean value theorem we have that
\begin{align*}
\psi(\etav_2) - \psi(\etav_1) &= \Grad\psi(\alpha \etav_2 + (1 - \alpha)\etav_1)^T[\etav_2 - \etav_1] \\
	&= (\etav_2 - \etav_1)^T\Exp{X}{\ST(x) | \alpha \etav_2 + (1 - \alpha)\etav_1}
\end{align*}
for some $\alpha \in [0, 1]$. Then we have that,
\begin{align}
\KL{\Pf_1}{\Pf_2}  
&= \EST(\etav_1)^T[\etav_1 - \etav_2] + (\etav_2 - \etav_1)^T \EST(\alpha \etav_2 + (1 - \alpha) \etav_1) \notag \\
&= (\etav_1 - \etav_2)^T\{ \EST(\etav_1) - \EST(\alpha \etav_2 + (1 - \alpha) \etav_1) \} \label{eq:kl_ith}
\end{align}
Since the function $\EST$ is the gradient of the convex function $\psi$, it is monotonic.
Therefore, the function $\EST(\alpha \etav_2 + (1 - \alpha)\etav_1)$ takes the maximum value at the end
points $\alpha = 0$ or at $\alpha = 1$. Assuming $\EST$ is maximized at $\alpha = 0$,
the $i$-th KL divergence term can be upper bound using \eqref{eq:kl_ith} as:
\begin{align*}
0 \leq \KL{\Pf_1}{\Pf_2} \leq (\etav_1 - \etav_2)^T\{\EST(\etav_1) - \EST(\etav_2)\}
\end{align*}
On the other hand, assuming $\EST$ is maximized at $\alpha = 1$, the
$i$-th KL divergence term can be upper bound using \eqref{eq:kl_ith} as:
\begin{align*}
0 \leq \KL{\Pf_1}{\Pf_2} \leq 0.
\end{align*}
Therefore, clearly, $\EST(\alpha \etav_2 + (1 - \alpha)\etav_1)$ is maximized at $\alpha = 0$.
\end{proof}
\begin{proof}[Proof of Theorem \ref{thm:main}]
Setting the measure $\Pf_{\Gs}$ to be the uniform over $\Gs$,
and using the Fano's inequality from Theorem \ref{thm:fano} and 
the mutual information bound from Lemma \ref{lemma:mi_bound}, combined with our 
Assumption \ref{ass:kl}, we can bound the estimation error as follows:
\begin{align*}
\err \geq 1 - \frac{nm\klmax + \log 2}{\log \Abs{\Gs}}.
\end{align*}
Then by using the lower bounds on the number of DAG structures in each of the
ensembles from Lemmas \ref{lemma:gsrm_size}, \ref{lemma:gsrmk_size} and \ref{lemma:ls_size},
and setting $\err$ to $\nicefrac{1}{2}$, we prove our claim.
\end{proof}
\section{Proofs of Results for Commonly Used Bayesian Networks}
\label{app:proof_common_bns}
\begin{proof}[Proof of Lemma \ref{lemma:cpt_mi_bound} (Mutual Information bound for CPT networks)]
For CPT, the mutual information bound  
is representative of the case when we do not have a closed form solution for the
marginal and joint distributions; yet, we can easily bound $\Dl(\etai, \etaz)$ through
a simple application of the Cauchy-Schwartz inequality, and obtain tighter bounds than the
naive $\BigO{mn \log v}$ bound on the mutual information $\MI(\Data; G | \Theta)$.
The sufficient statistics and the natural parameter for the
categorical distribution is given as follows:
\begin{align*}
\ST(x) &= \left(\Ind{x = j}\right)_{j=1}^{v} && 
\etai(\Xpii, \Theta_i) = \left(\log \theta_{ij}(\Xpii) \right)_{j=1}^{v}. 
\end{align*}
Therefore, the expected sufficient statistic $\EST(\etai) = \Theta_i(\Xpii)$. From that we 
get the following upper bound
\begin{align*}
\sup_{\Theta \in \Pms(G)} \Exp{\Xpii}{\Dl(\etai, \etaz)}
&= \sup_{\Theta \in \Pms(G)} \Exp{\Xpii}{(\etai - \etaz)^T\{\EST(\etai) - \EST(\etaz)\}} \\
&\leq \sup_{\Theta \in \Pms(G)} \Exp{\Xpii}{\NormInfty{\etai - \etaz}\NormI{\EST(\etai) - \EST(\etaz)}} \\
&\leq 4 \log(\nicefrac{1}{\thetamin} ),
\end{align*}
where in the above we used the Cauchy-Schwartz inequality followed by the fact
$\NormI{\Theta_i(\xv)} = 1,\, \forall \xv \in \Xc$.
\end{proof}

\begin{proof}[Proof of Lemma \ref{lemma:mi_bound_gaussian} (Mutual Information bound for Gaussian)]
This exemplifies 
the case where we have closed form solutions for the joint and marginal distributions, which
in this case is Gaussian, and we can compute the expected value of $\Dl(\etai, \etaz)$.
The sufficient statistics and natural parameter for the $i$-th conditional distribution
are given as follows:
\begin{align*}
\ST(X_i) = \frac{X_i}{\nicefrac{\sigma}{\sqrt{2}}}, && \etai = \frac{\mu_i}{\nicefrac{\sigma}{\sqrt{2}}}.
\end{align*}
Also note that, $\forall i \in [m],$ the marginal expectation $\Exp{}{X_i} \leq \mu \NormII{\wv_i}$.
Therefore, we have that $\Exp{\Xpii}{\Dl(\etai, \etaz)} = \Exp{\Xpii}{\nicefrac{2(\mu_i - \mu)^2}{\sigma^2}} = 
\nicefrac{2(\Exp{\Xpii}{(\mu_i - \mu)}^2 + \Var{\Xpii}{\mu_i})}{\sigma^2} 
	\leq \nicefrac{2(\mu^2(\NormII{\wv_i} - 1)^2 + \Var{\Xpii}{\mu_i})}{\sigma^2}$.
Hence, we need to upper bound  $\Var{\Xpii}{\mu_i}$ in order to upper bound $\Exp{\Xpii}{\Dl(.)}$. 
Let $(i)_G \in [m]$ be the $i$-th node in the topological order defined by the graph $G$.
We use the shorthand notation $(i)$, when it is clear from context that the $i$-th node
in the topological ordering is intended.
Now, from the properties of the Gaussian distribution we know that if the conditional
distributions are all Gaussian, then the joint distribution over any subset of $\Xs$ is Gaussian as well.
Let $\mat{\Sigma} \in \R^{m \times m}$ be the covariance matrix for the joint distribution over $\Xs$, 
and similarly $\mat{\Sigma}_{(i)} \in \R^{i \times i}$ denote the covariance matrix for the joint distribution over 
variables $\Set{X_{(1)}, \ldots, X_{(i)}}$. 
Let $\wbv_{(i)} \in \R^{i - 1}$ be the weight vector defined as follows:
\begin{align*}
	\forall j \in [i - 1],\, (\overline{w}_{(i)})_j &= \left\{\begin{array}{cc}
		0 & \text{if $j \notin \mathsf{\pi}_{(i)}$,} \\
		(w_{(i)})_j & \text{otherwise.} 
	\end{array}\right. 
\end{align*} 
Note that $\NormII{\wbv_{(i)}} = \NormII{\wv_{(i)}} \leq \nicefrac{1}{\sqrt{2(i - 1)}}$.
Then, we have that $\Var{}{\mu_{(i)}} = \wbv_{(i)_G}^T \mat{\Sigma}_{(i-1)_G} \wbv_{(i)_G}$
and $\Var{}{X_{(i)}} = \Var{}{\mu_{(i)}} + \nicefrac{\sigma^2}{2}$.
Also, for any $j \in [i - 1]$, we have that $\Cov{}{X_{(i)} X_{(j)}} = \wbv_{(i)_G}^T(\mat{\Sigma}_{(i-1)_G})_{*,j}$,
where $(\mat{\Sigma}_{(i-1)_G})_{*,j}$ is the $j$-th column of the matrix $\mat{\Sigma}_{(i-1)}$. Therefore,
the covariance matrix $\mat{\Sigma}_{(i)_G}$ can be written as follows:
\begin{align*}
	\mat{\Sigma}_{(i)_G} = \matrx{\mat{\Sigma}_{(i-1)_G} & \mat{\Sigma}_{(i-1)_G} \wbv_{(i)_G} \\
		\wbv_{(i)_G}^T \mat{\Sigma}_{(i-1)_G} & \wbv_{(i)_G}^T \mat{\Sigma}_{(i-1)_G} \wbv_{(i)_G} + \nicefrac{\sigma^2}{2}},
\end{align*}
where $\mat{\Sigma}_{(1)} \in \R^{1 \times 1} = [[\nicefrac{\sigma^2}{2}]]$.
Since $\mat{\Sigma}_{(i)}$ is positive definite, we have that 
$\eigmax(\mat{\Sigma}_{(i)}) \leq \eigmax(\mat{\Sigma}_{(i - 1)}) + 
\wbv_{(i)}^T \mat{\Sigma}_{(i-1)} \wbv_{(i)} + \nicefrac{\sigma^2}{2}$. Next,
we prove, by induction, that $\eigmax(\mat{\Sigma}_{(i)}) \leq i \sigma^2$. First,
note that the base case holds, i.e., $\eigmax(\mat{\Sigma}_{(1)}) = \nicefrac{\sigma^2}{2} \leq \sigma^2$.
Now assume, that $\eigmax(\mat{\Sigma}_{(i - 1)}) \leq (i - 1)\sigma^2$. Then, we have:
\begin{align*}
\eigmax(\mat{\Sigma}_{(i)}) 
	&\leq \eigmax(\mat{\Sigma}_{(i - 1)}) + 
		\wbv_{(i)}^T \mat{\Sigma}_{(i-1)} \wbv_{(i)} + \frac{\sigma^2}{2} \\
	&\leq (i - 1)\sigma^2 + \NormII{\wbv_{(i)}}^2 (i - 1)\sigma^2 + \frac{\sigma^2}{2} \\
	&\leq (i - 1)\sigma^2 + \frac{1}{2(i-1)}(i - 1)\sigma^2 + \frac{\sigma^2}{2} \\
	&\leq i \sigma^2.
\end{align*}
Therefore, we can bound the variance of $\mu_i$ as follows:
\begin{align*}
\Var{}{\mu_{(i)}} &= \wbv_{(i)}^T \mat{\Sigma}_{(i-1)} \wbv_{(i)} 
	\leq \NormII{\wbv_{(i)}}^2 \eigmax(\mat{\Sigma}_{(i-1)})  \\
&\leq \frac{1}{2(i - 1)} (i - 1)\sigma^2 = \frac{\sigma^2}{2}.
\end{align*}
Thus, we have that $\Var{}{\mu_{i}} \leq \nicefrac{\sigma^2}{2}$,
$\Exp{\Xpii}{\Dl(\etai, \etaz)} \leq 1 + \nicefrac{2\mu^2 (\NormII{\wv_i} - 1)^2}{\sigma^2}$,
and $\klmax \leq 1 + \nicefrac{(2 \mumax^2 (\wmax^2 + 1))}{\sigmamin^2}$.

\end{proof}

\begin{proof}[Proof of Lemma \ref{lemma:mi_bound_noisy_or} (Mutual Information bound for Noisy-OR)]
The expected sufficient statistics and natural parameter for the Bernoulli distribution
is given as:
\begin{align*}
\EST(\etai) = 1 - \theta_i, && \etai = \log \frac{\theta_i}{1 - \theta_i}.
\end{align*}
Also, $\EST(\etaz) = \theta$ and $\etaz = \log(\nicefrac{(1 - \theta)}{\theta})$.
Using the fact that $\theta^2 \leq \theta_i \leq \theta$, we can bound $\Exp{\Xpii}{\Dl(\etai, \etaz)}$
as follows:
\begin{align*}
\Dl(\etai, \etaz)
	&=\left(\log \frac{\theta_i}{1 - \theta_i} - \log \frac{1 - \theta}{\theta}\right) (1 - \theta_i - \theta) \\
	&\leq \Abs{\log \frac{\theta_i \theta}{(1 - \theta_i) (1 - \theta)}} \\
	&\leq 2 \Abs{\log \frac{\theta}{1 - \theta}}
\end{align*}
Therefore, we have that
$\klmax \leq 2\Abs{\log(\nicefrac{\hat{\theta}}{(1 - \hat{\theta})})}$.

\end{proof}
\begin{proof}[Proof of Lemma \ref{lemma:mi_bound_logistic} (MI bound for Logistic regression networks)]
The expected sufficient statistics and the natural parameter are given as follows:
\begin{align*}
\EST(\etai) = \sigma(\wv_i^T \Xpii),&& \etai = \log \frac{\sigma(\wv_i^T \Xpii)}{1 - \sigma(\wv_i^T \Xpii)} = \wv_i^T \Xpii.
\end{align*}
From the above, we also have that $\EST(\etaz) = \nicefrac{1}{2}$ and $\etaz = 0$. Then, $\klmax$
is bounded as follows:
\begin{align*}
\klmax &= \Exp{\Xpii}{\wv_i^T \Xpii(\sigma(\wv_i^T \Xpii) -\nicefrac{1}{2})} \\
	&\leq \frac{1}{2} \Exp{\Xpii}{\wv_i^T \Xpii} \leq \frac{1}{2} \Exp{\Xpii}{\NormI{\wv_i} \NormInfty{\Xpii}} \\
	&\leq \frac{\NormI{\wv_i}}{2} \leq \frac{\wmax^1}{2}. 
\end{align*}
\end{proof}

\end{appendices}

\end{document}